\documentclass[hidelinks,onefignum,onetabnum]{siamart251216}

\usepackage{amsfonts,amssymb,mathtools}
\usepackage{graphicx}
\usepackage{bm}
\usepackage{upgreek} 

\newsiamremark{remark}{Remark}
\newsiamthm{claim}{Claim}

\providecommand{\tightlist}{\setlength{\itemsep}{0pt}\setlength{\parskip}{0pt}}

\headers{Avoiding unsafe sets when training with Langevin dynamics}{A. M. Oberman}

\title{Avoiding unsafe sets when training with Langevin Dynamics\thanks{Submitted to the editors July 10, 2026.
  \funding{This research was supported by NSERC, Coefficient Giving, and CIFAR.}}}

\author{Adam M. Oberman\thanks{Department of Mathematics and Statistics, McGill University, Montreal, QC, Canada;
  Mila, Quebec AI Institute; and LawZero
  (\email{adam.oberman@mcgill.ca}).}}

\begin{document}
\maketitle

\begin{abstract}
Training a model with noisy gradient descent can be idealized as overdamped Langevin dynamics, and a natural safety question is to bound the probability $\nu_t(\mathcal{A}_H) = \mathbb{P}(Q_t \in \mathcal{A}_H)$ that the trajectory lies in a designated failure region $\mathcal{A}_H$. We study this for a smooth, strongly convex loss in $d$ dimensions, with $\mathcal{A}_H$ separated from the minimizer by an energy gap. At the end of training, the equilibrium mass $\pi(\mathcal{A}_H)$ is exponentially small in $d$, with a complementary energy-barrier rate when the noise is small. Along the trajectory, a shape-free bound $\nu_t(\mathcal{A}_H) \le \pi(\mathcal{A}_H)(1 + \sqrt{\chi_0^2/\pi(\mathcal{A}_H)}\,e^{-mt})$ shows the in-set probability relaxes to (twice) the static value after a burn-in of order $d$, using only the global spectral gap $m$. A worked Ornstein--Uhlenbeck example shows this burn-in is necessary: an angular slice of the equilibrium shell can transiently swell by a factor exponential in $d$, though its equilibrium mass is tiny. To rule this out we introduce a local relaxation rate, defined through the spectral measure of the region's centered indicator rather than a Dirichlet-form Rayleigh quotient. For geometrically isolated regions this rate exceeds the global one, shrinking the burn-in, and with a maximum-principle ceiling it caps the trajectory probability uniformly in time. Strong convexity sets how fast training relaxes, but the shape of the unsafe set decides whether the trajectory bulges through it on the way to equilibrium.
\end{abstract}

\begin{keywords}
Langevin dynamics, stochastic gradient descent, Fokker--Planck equation, Gibbs measure, Poincar\'e inequality, spectral gap, functional inequalities, concentration of measure, metastability, Ornstein--Uhlenbeck process, AI safety
\end{keywords}

\begin{MSCcodes}
35Q84, 60J60, 60H10, 82C31, 68T05
\end{MSCcodes}

\section{Introduction}\label{sec:introduction}
A model trained by noisy gradient descent can be idealized as a diffusion
on its loss landscape, and a basic safety question is whether the training
trajectory ever enters a designated bad region of parameter space. Write
\(Q_t \in \mathbb{R}^d\) for the parameters at training time \(t\) and
\(\mathcal{A}_H \subseteq \mathbb{R}^d\) for a failure region: a set of
parameters whose induced behavior we would like the trained model to avoid.
Even when training ends safely, the trajectory can pass through
\(\mathcal{A}_H\) on its way to the optimum, so the object of interest is
the in-set probability
\[
\nu_t(\mathcal{A}_H) \;=\; \mathbb{P}(Q_t \in \mathcal{A}_H)
\]
at every training time \(t\), not only at convergence.

Several safety concerns share this shape. In code generation,
\(\mathcal{A}_H\) is the set of parameters that emit a hidden backdoor or a
known-insecure pattern, and one wants the chance that training ever lands
there to be negligible. In alignment, \(\mathcal{A}_H\) is a region of
misaligned or deceptive behavior that a model might drift through before
settling into a benign optimum. The motivating instance for this work is
the Scientist AI (SAI) Predictor safety case of \cite{SAI26}, which
separates an honest non-agentic Predictor from a scaffold that gates its
outputs through a guardrail, and bounds the probability that a
consequence-invariant training process produces a \emph{dangerous}
Predictor (one whose guarded deployment causes a designated harm event
above a normative threshold) uniformly in \(t\) by
\[
\nu_t(\mathcal{A}_H) \;\le\; C_{\mathrm{bad}}\, R_{\mathrm{shell}}.
\]
Here \(R_{\mathrm{shell}}\) is the conditional fraction of dangerous
Predictors inside a narrow loss band under the initialization, argued
exponentially small on the grounds that danger requires many coordinated
errors, and \(C_{\mathrm{bad}}\) is the within-band enrichment factor of
training, assumed bounded as a stated requirement on the process. That
argument treats the dynamics generating \(\nu_t\) abstractly, as a
distribution over training trajectories indexed by \(t\); the present paper
supplies those dynamics and bounds \(\nu_t(\mathcal{A}_H)\) directly.

We model a training run as the overdamped Langevin dynamics
\[
dQ_t = -\nabla J(Q_t)\, dt + \sigma\, dW_t,
\]
where \(J\) is the training loss on \(\mathbb{R}^d\), \(W_t\) is standard
Brownian motion, and \(\sigma > 0\) is a noise level set by the
optimization (informed, for example, by batch size and learning rate).

This paper proves an upper bound on the probability that
the law of a Langevin training trajectory occupies the designated failure
region \(\mathcal{A}_H \subseteq \mathbb{R}^d\). Under explicit
smoothness, convexity, and energy-gap hypotheses on the loss \(J\), the
bound is exponentially small in \(d\) and is uniform in time,  after a
burn-in of order \(d\). Two grades are proved: a shape-free version that
uses only the total equilibrium mass \(\pi(\mathcal{A}_H)\) and the
global Poincaré constant, and a shape-aware version that uses a local
relaxation rate \(\lambda_{\mathcal{A}_H}\) together with a
maximum-principle ceiling to remove the burn-in altogether for
flux-isolated sets.

\subsection*{The Langevin idealization} This SDE is the standard
continuous-time model of stochastic gradient training: at small step
size, minibatch SGD on a smooth loss has Itô-SDE limits with drift
\(-\nabla J\) and noise covariance set by the minibatch gradient
covariance \cite{LTE19, MHB17, HLLL19}, and the explicitly noised SGLD
algorithm of \cite{WT11} realizes the model used here exactly. Isotropy
of the noise and the continuous-time limit are idealizations; we take
them as given. \Cref{sec:discrete-time} records how the set-mass bounds
transfer to the discrete-time algorithm; the anisotropy gap is not
addressed here.

The loss \(J\) has a minimizer \(Q_\star\), normalized to \(J(Q_\star) = 0\). We
single out a \textbf{failure region}
\(\mathcal{A}_H \subseteq \mathbb{R}^d\). The minimizer is safe,
\(Q_\star \notin \mathcal{A}_H\), but the noise keeps \(Q_t\) from ever
settling exactly at \(Q_\star\), so we must ask how much probability mass
the trajectory places in \(\mathcal{A}_H\).

The natural hope is that if the failure region has small equilibrium
mass \(\pi(\mathcal{A}_H)\), then \(\mathbb{P}(Q_t \in \mathcal{A}_H)\)
is small for all \(t\). This hope is false in general, and the way it
fails is the organizing problem of this paper.

\subsection*{Transient swelling} Even under the strongest possible
convexity, the trajectory mass of a small set can bulge far above both
its initial and its equilibrium value. The canonical demonstration is
the one-dimensional Ornstein-Uhlenbeck process \(J(Q) = Q^2/2\) with
\(\pi = N(0,1)\). Start the
trajectory concentrated near \(Q = 10\) and take
\(\mathcal{A}_H = [4,6]\). Then the initial mass of \(\mathcal{A}_H\) is
zero and its equilibrium mass is about \(10^{-5}\), yet at
\(t = \log 2\) the law is approximately \(N(5, 3/4)\) and places roughly
\(0.7\) of its mass in \(\mathcal{A}_H\). The mass swells by a factor of
about \(10^5\) on its way home, because \(\mathcal{A}_H\) sits directly
on the transport path from the start to equilibrium. No convergence rate
for KL or Wasserstein distance forbids this: those are global
functionals and say nothing about a single set.

\begin{figure}[t]
  \centering
  \footnotesize
  \includegraphics[width=\linewidth]{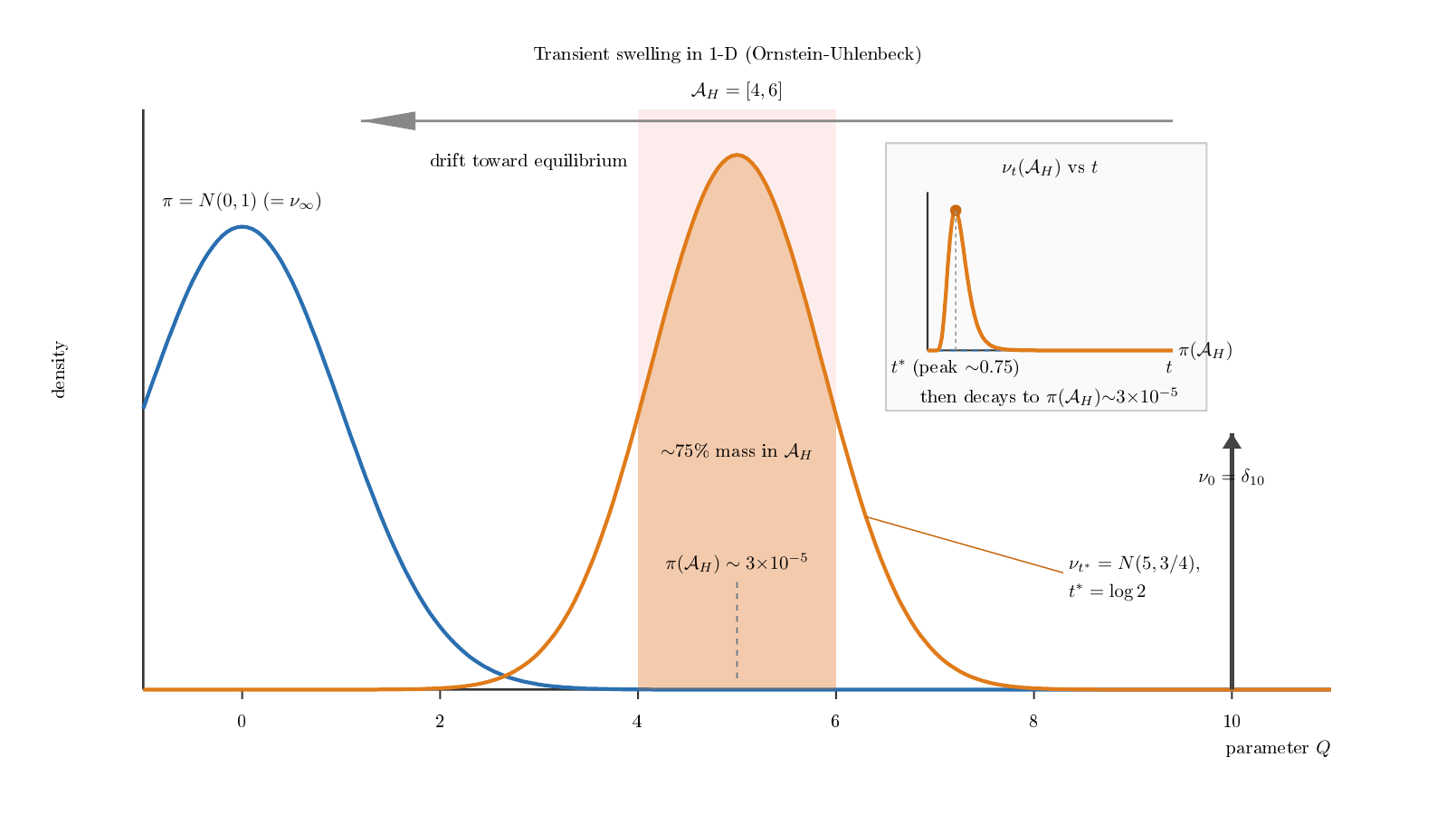}
  \caption{Transient swelling in the 1D Ornstein-Uhlenbeck process with
    $\pi = N(0,1)$. From the point-mass start $\nu_0 = \delta_{10}$, the
    law $\nu_t$ passes through $N(5, 3/4)$ at $t^* = \log 2$, placing
    about $75\%$ of its mass in $\mathcal{A}_H = [4,6]$, whose
    equilibrium mass is $\pi(\mathcal{A}_H) \approx 3 \times 10^{-5}$.
    The inset shows $\nu_t(\mathcal{A}_H)$ peaking near $t^*$ and
    decaying to $\pi(\mathcal{A}_H)$.}
  \label{fig:swelling}
\end{figure}

\subsection*{Higher-dimensional swelling} Swelling is not a one-dimensional artifact. In high dimensions the equilibrium of a strongly convex loss concentrates on a thin shell, and a failure region that is an angular slice of that shell suffers the same transit swelling, now amplified by
dimension. Section 7 makes this quantitative: the transient overshoot
factor for shell dynamics grows exponentially in the ambient dimension
unless the geometry of \(\mathcal{A}_H\) is controlled. This is the
concrete reason we cannot rely on equilibrium mass alone, and it is what
motivates a geometric (Cheeger-type) notion of isolation.

\subsection*{What this paper proves} We give two complementary estimates.

\begin{enumerate}
\def\labelenumi{\arabic{enumi}.}
\tightlist
\item
  A \textbf{static mass bound} on \(\pi(\mathcal{A}_H)\), the
  probability at the end of training, which is exponentially small in
  the dimension \(d\) (Section 4).
\item
  A \textbf{dynamic mass bound} on \(\mathbb{P}(Q_t \in \mathcal{A}_H)\)
  along the trajectory (Section 5), which relaxes to the static bound
  after a burn-in time of order \(d\).
\end{enumerate}

The dynamic bound comes in two grades. The first (Section 5) assumes
only the total equilibrium mass \(\pi(\mathcal{A}_H)\) and gives a
clean, dimension-free relaxation rate, at the cost of a transient window
during which it is uninformative (the swelling window).
The second (Section 6) assumes in addition that \(\mathcal{A}_H\) is
geometrically isolated, quantified by a local relaxation rate
\(\lambda_{\mathcal{A}_H} \ge m\), which shrinks the transient window by
the factor \(m/\lambda_{\mathcal{A}_H}\) and, combined with a
maximum-principle ceiling, caps the trajectory mass uniformly in time.
Section 7 returns to the Ornstein-Uhlenbeck and shell examples to show
both bounds are sharp and to exhibit the geometries where swelling is
real.

\subsection{Notation}\label{sec:notation}

Throughout, \(\nu_t\) denotes the law of \(Q_t\), \(\pi\) the stationary
Gibbs measure, and \(\mathcal{A}_H\) the failure region. We use
\(\sigma\) for the noise level (rather than an inverse temperature
\(\beta\); the two are related by \(\beta = 2/\sigma^2\)). The following quantities
measure how far the initial law \(\nu_0\) is from \(\pi\) and
how isolated \(\mathcal{A}_H\) is:

\begin{itemize}
\tightlist
\item
  \(M := \|\nu_0/\pi\|_\infty\), the initial density ratio in
  \(L^\infty\).
\item
  \(\chi_0^2 := \chi^2(\nu_0 \,\|\, \pi)\), the chi-squared divergence.
\item
  \(\lambda_{\mathcal{A}_H}\), the local spectral gap of
  \(\mathcal{A}_H\) (Definition in Section 6).
\end{itemize}

\section{Related work and positioning}\label{sec:related-work-and-positioning}

\subsection*{The object controlled} Much of the sampling literature \cite{VW19, EHZ22, CELSZ22, Che24, Pav14, BGL14} bounds a global divergence of the
law from the target: KL, chi-squared, Rényi, or Wasserstein. The
metastability literature \cite{MS14, BEGK04, BGK05} bounds spectral gaps
and exit times of metastable wells. The SGLD hitting-time analyses
\cite{RRT17, ZLC17} bound the time to first reach a target region.
Fokker-Planck analyses of the training dynamics itself \cite{DZ21}
characterize which minima are selected, flat versus sharp, as a function
of the noise or batch size, an asymptotic minima-selection question
complementary to the finite-time set occupancy studied here. The
object controlled here is none of these: it is \(\nu_t(\mathcal{A}_H)\),
the probability of being in a fixed measurable set at a fixed time.
No global divergence forbids transient swelling: a global functional
can be small while a single set's mass is large (Section 7).

\subsection*{Structure of the bound} \cref{thm:dynamic-mass} is
downstream of the standard \(L^2(\pi)\) contraction
\(\|u_t - 1\|_{L^2(\pi)} \le \sqrt{\chi_0^2}\,e^{-mt}\), which is in
\cite{Pav14, MV00, AMTU01, BGL14}. The new step is pairing this
contraction with the \emph{centered} indicator
\(\mathbf{1}_{\mathcal{A}_H} - \pi(\mathcal{A}_H)\) rather than the raw
indicator. Centering replaces the second moment \(\pi(\mathcal{A}_H)\)
by the variance \(\pi(\mathcal{A}_H)(1-\pi(\mathcal{A}_H))\), which is
what makes the bound informative for rare sets. The centering device is
the ``warm-start'' trick standard in geometric sampling \cite{LS93, LV07}, applied here to a fixed measurable set in conjunction with the
chi-squared contraction. The contribution is not in any single
ingredient but in the synthesis: a static dimensional bound on
\(\pi(\mathcal{A}_H)\) (Section 4, sublevel volume plus smoothness, in
the spirit of Gaussian concentration but with explicit constants), the
\(L^2\)-to-set-mass conversion (Section 5), and the maximum-principle
ceiling (\cref{prop:max-principle}) that closes the transient
window from above.

\subsection*{Geometric input} The local rate
\(\lambda_{\mathcal{A}_H}\) of Definition~\ref{def:local-rate} is
related to two existing notions. In \cite{ZLC17} a
Cheeger constant of the target region is used to bound the SGLD hitting
time of approximate stationary points; this and
\(\lambda_{\mathcal{A}_H}\) are siblings, both measuring how isolated a
set is under the dynamics, but \cite{ZLC17} uses it on the favorable side
of the dynamics (entering a good set) while \cref{thm:flux-isolation}
uses it on the unfavorable side (avoiding a bad set), so the resulting
inequalities run in opposite directions. In the metastability literature
\cite{MS14, BEGK04, BGK05} the analogous quantity is the Eyring-Kramers
rate of escape from a metastable well, which governs case (b) of the
Section 6.3 taxonomy (far-tail ball) through the Arrhenius-small
slow-mode overlap of \cref{prop:tax-b}. That literature does not define a comparable quantity
for cases (a) and (c), the off-shell ball at the mode and the
shell-slice, since neither is a basin. \cref{def:local-rate}
handles all three uniformly by defining \(\lambda_{\mathcal{A}_H}\)
through the spectral measure of the centered indicator rather than
through a Dirichlet-energy Rayleigh quotient, which is the technical
move that makes the definition work on indicators outside the Dirichlet
domain (Section 6.1).

\subsection*{What is not improved} This paper does not strengthen the global
Poincaré constant under log-concavity; the rate \(m\) from Brascamp-Lieb
\cite{BL76}, from the Bakry-Émery \(\Gamma_2\) derivation \cite{BE85, BGL14}, and, in the non-strongly-log-concave case, from the KLS work
\cite{LV24} and the surveys of Cattiaux-Guillin \cite{CG14}, is used as a
black box. Similarly, the discrete-time Langevin Monte Carlo analyses
\cite{CELSZ22, EHZ22} give convergence under weaker functional
inequalities than the chi-squared bound used here (Poincaré,
Latala-Oleszkiewicz, modified log-Sobolev); the discrete-time transfer
of \cref{sec:discrete-time} builds on them rather than competes with
them.

\subsection*{Framing} The motivation is to bound the probability that a
Langevin training trajectory occupies a designated failure region at any
finite time, in a high-dimensional parameter space. This places the
paper next to \cite{RRT17, ZLC17} in motivation, with a complementary
deliverable: those papers bound when training reaches a good set, while
this paper bounds whether training is currently in a bad set, in a
shape-free form (\cref{thm:dynamic-mass}) and a shape-aware form
(\cref{thm:flux-isolation}), which makes the cost of geometric
ignorance explicit. The OU and shell-slice examples of Section 7 show
the gap between the two forms is real and exponential in \(d\), which
is the strongest case for why a local rate is needed and not just a
faster global rate.

\subsection*{Scope and limitations} The results rest on the following
idealizations. First, the
dynamics is the continuous-time, isotropic-noise Langevin SDE; minibatch
SGD is discrete and anisotropic, and we do not close either gap here,
though \cref{sec:discrete-time} shows the set-mass conversion transfers
to discrete-time Langevin Monte Carlo given any chi-squared convergence
bound for the chain. Second, global strong convexity is a basin-local
idealization: it should be read as modeling a fine-tuning or
late-training phase within one basin of attraction, not the full
nonconvex landscape. The dynamic bounds use convexity only through the
spectral gap of \(\pi\), so they extend as stated to any potential whose
Gibbs measure satisfies a Poincar\'e inequality \cite{CG14, LV24}, at
the price of the explicit constants. Third, the energy-gap link
\(J \ge \psi(a_H)\) is a hypothesis, not a theorem: it asserts that
unsafe parameters are visible to the loss. Failure regions that are
loss-indistinguishable (a deceptive model achieving low loss) violate it
by definition and are outside the reach of any argument that works
through \(\pi\); in the safety case of \cite{SAI26} such regions are the
responsibility of a different layer (the guardrail), not of the training
dynamics. Fourth, the local rate \(\lambda_{\mathcal{A}_H}\) of Section
6 is verified per family of sets (\cref{sec:taxonomy-proofs}) rather
than by a universal geometric criterion; the conductance
\(h_{\mathcal{A}_H}\) is a diagnostic, and we prove no Cheeger-type
comparison between the two.

\section{Setup}\label{sec:setup}

\subsection*{The dynamics and its equilibrium} With
\(J \in C^2(\mathbb{R}^d)\) and minimizer \(Q_\star\), \(J(Q_\star) = 0\), the
stationary distribution of the SDE is the Gibbs measure
\[
\pi(Q) = \frac{1}{Z}\, e^{-2 J(Q)/\sigma^2}, \qquad Z = \int_{\mathbb{R}^d} e^{-2 J(Q)/\sigma^2}\, dQ.
\]
This captures the distribution of \(Q_T\) at the end of training. The
law \(\nu_t\) of \(Q_t\) solves the Fokker-Planck equation

\begin{equation}\label{eq:fokker-planck}
\partial_t \nu_t = \nabla \cdot (\nu_t\, \nabla J) + \frac{\sigma^2}{2}\, \Delta \nu_t, \qquad \nu_t\big|_{t=0} = \nu_0.
\end{equation}

\subsection*{Remark (consequence invariance)} The dynamics depends on the
loss landscape only through \(J\): the drift is \(-\nabla J\), the noise
is independent of everything, and \(\nu_0\) is assumed independent of
any auxiliary objective. Replacing any function the procedure does not
see by an arbitrary other function leaves the trajectory law unchanged.
This rules out by construction the mesa-optimization \cite{Hub19} and
reward-hacking \cite{Ska22} failure modes, in which a training procedure
ostensibly minimizing \(J\) implicitly steers toward a hidden objective.
The contrast is the implicit-bias literature for discrete-time SGD
\cite{BD21, SDBD21}: finite-step discretization introduces implicit
regularizers (such as \(\|\nabla J\|^2/2\)) on top of \(J\), whereas the
continuous-time SDE studied here does not.

\subsection*{Convexity and smoothness} Most results assume the two-sided
Hessian envelope
\[
m\, I \;\preceq\; \nabla^2 J(Q) \;\preceq\; L\, I \qquad \text{for all } Q \in \mathbb{R}^d, \quad 0 < m \le L,
\]
which integrates to the quadratic envelope
\[
\frac{m}{2}\|Q - Q_\star\|^2 \;\le\; J(Q) \;\le\; \frac{L}{2}\|Q - Q_\star\|^2.
\]
The upper bound (\(L\)-smoothness) alone suffices for the static
dimensional bound of Section 4; the lower bound (\(m\)-strong convexity)
is what supplies the spectral gap used in Sections 5 and 6.

\subsection*{The failure region} We take
\[
\mathcal{A}_H = \{Q \in \mathbb{R}^d : a_H(Q) > \alpha\}
\]
for a measurable alarm function \(a_H \ge 0\) with \(a_H(Q_\star) = 0\), and
we assume a monotone link to the loss,
\[
J(Q) \;\ge\; \psi(a_H(Q)),
\]
for a strictly increasing \(\psi: [0,\infty) \to [0,\infty)\) with
\(\psi(0) = 0\). By monotonicity
\(\mathcal{A}_H \subseteq \{J \ge \psi(\alpha)\}\): triggering the alarm
costs at least \(\psi(\alpha)\) in loss. This energy gap is what makes
\(\mathcal{A}_H\) rare under \(\pi\).

\subsection*{Chi-squared divergence of the start} When the initial law
matters we record
\[
\chi_0^2 \;:=\; \chi^2(\nu_0 \,\|\, \pi) \;=\; \int \left(\frac{\nu_0}{\pi} - 1\right)^2 d\pi \;=\; \int \frac{\nu_0^2}{\pi}\, dQ - 1.
\]
\section{Static mass bound: probability at the end of training}\label{sec:static-mass-bound-probability-at-the-end-of-training}

The stationary measure \(\pi\) describes \(Q_T\) for \(T\) large, so
\(\pi(\mathcal{A}_H)\) is the probability of ending training inside the
failure region. We show it is exponentially small in \(d\). The
mechanism is dimensional: \(\pi\) spreads over a large volume, and the
rare, high-loss set \(\mathcal{A}_H\) captures exponentially little of
it. This section uses only \(L\)-smoothness and a volume bound on
\(\mathcal{A}_H\).

\subsection*{Dimensionless variables} Define the natural length scale, the
dimensionless energy variable, and the dimensionless gap
\[
\ell^2 \;:=\; \frac{\uppi\sigma^2}{L}, \qquad s \;:=\; \frac{2 v}{\sigma^2}, \qquad \hat\alpha \;:=\; \frac{2 \psi(\alpha)}{\sigma^2}.
\]
\subsection*{Sublevel-volume bound} Let
\(V(v) := \big|\{Q \in \mathcal{A}_H : J(Q) \le v\}\big|\) be the
Lebesgue volume of the part of \(\mathcal{A}_H\) below loss level \(v\).
Assume \(V(v) \le \Phi(v)\) for some non-decreasing \(\Phi\) with
\(\Phi(v) = 0\) for \(v < \psi(\alpha)\), and set the dimensionless
sublevel volume
\[
\hat\Phi(s) \;:=\; \Phi(s\sigma^2/2)\,\big/\,\ell^d,
\]
normalized by the volume cell \(\ell^d\). It satisfies
\(\hat\Phi(s) = 0\) for \(s < \hat\alpha\).

\begin{proposition}[dimensionless static mass bound]\label{prop:static-mass}
Under \(\nabla^2 J \preceq L\,I\) and
\(J(Q_\star) = 0\),
\[
\pi(\mathcal{A}_H) \;\le\; \int_0^\infty \hat\Phi(s)\, e^{-s}\, ds \;=\; \mathbb{E}_{S \sim \mathrm{Exp}(1)}\, \hat\Phi(S).
\]
\end{proposition}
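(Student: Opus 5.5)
Write \(\beta = 2/\sigma^2\), so that \(\pi(\mathcal{A}_H) = Z^{-1}\int_{\mathcal{A}_H} e^{-\beta J}\,dQ\). The plan is to bound the numerator by a layer-cake integral against the sublevel volume \(V\), bound \(Z\) from below using only \(L\)-smoothness, and then pass to dimensionless variables. The length scale \(\ell^2 = \uppi\sigma^2/L\) is chosen exactly so that the Gaussian lower bound on \(Z\) equals \(\ell^d\).

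\emph{Step 1 (partition function).} From \(\nabla^2 J \preceq L I\), \(J(Q_\star)=0\) and \(\nabla J(Q_\star)=0\), the quadratic envelope gives \(J(Q) \le \tfrac{L}{2}\|Q-Q_\star\|^2\). Hence
\[
Z \ge \int_{\mathbb{R}^d} e^{-\beta L\|Q-Q_\star\|^2/2}\,dQ = \Bigl(\frac{2\uppi}{\beta L}\Bigr)^{d/2} = \Bigl(\frac{\uppi\sigma^2}{L}\Bigr)^{d/2} = \ell^d .
\]

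\emph{Step 2 (layer cake for the numerator).} Write \(e^{-\beta J(Q)} = \int_{J(Q)}^\infty \beta e^{-\beta v}\,dv\). Since \(J \ge 0\), Tonelli applies and gives
\[
\int_{\mathcal{A}_H} e^{-\beta J}\,dQ = \int_0^\infty \beta e^{-\beta v}\,\bigl|\{Q\in\mathcal{A}_H : J(Q)\le v\}\bigr|\,dv = \int_0^\infty \beta e^{-\beta v} V(v)\,dv \le \int_0^\infty \beta e^{-\beta v}\Phi(v)\,dv .
\]
The identity holds even if some of the volumes are infinite, so no integrability assumption is needed; the bound is simply vacuous in that case. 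Measurability of \(\mathcal{A}_H\) is ensured because \(a_H\) is measurable and \(J\) is continuous.

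\emph{Step 3 (rescaling).} Substitute \(s = \beta v = 2v/\sigma^2\), so that \(\beta\,dv = ds\) and \(\Phi(v) = \Phi(s\sigma^2/2) = \ell^d\hat\Phi(s)\). Dividing by \(Z \ge \ell^d\) yields
\[
\pi(\mathcal{A}_H) \le \int_0^\infty \hat\Phi(s)\,e^{-s}\,ds = \mathbb{E}_{S\sim\mathrm{Exp}(1)}\hat\Phi(S).
\]
The support condition \(\hat\Phi(s)=0\) for \(s<\hat\alpha\) is not needed for the inequality itself; it only makes the right-hand side small later on.

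\emph{Main obstacle.} There is little genuine difficulty here. The only point that needs care is the lower bound on \(Z\). It uses \(\nabla J(Q_\star)=0\), which holds because \(Q_\star\) is an interior minimizer of a \(C^2\) function on \(\mathbb{R}^d\). It also needs only the upper Hessian bound, which matches the statement's hypotheses: no strong convexity is used.
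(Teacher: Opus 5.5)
Your proposal is correct and matches the paper's proof step for step. You use the layer-cake identity with Fubini/Tonelli to get $\int_{\mathcal{A}_H} e^{-2J/\sigma^2}\,dQ \le \frac{2}{\sigma^2}\int_0^\infty \Phi(v)\,e^{-2v/\sigma^2}\,dv = \ell^d\int_0^\infty \hat\Phi(s)\,e^{-s}\,ds$, the smoothness envelope to get $Z \ge \ell^d$, and then divide. Your remarks that the envelope needs $\nabla J(Q_\star)=0$ and that $J \ge 0$ justifies starting the layer cake at $v=0$ are details the paper leaves implicit.
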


\begin{proof}
Write \(\pi(\mathcal{A}_H) = N/Z\). The layer-cake
identity
\(e^{-2 J/\sigma^2} = (2/\sigma^2) \int_0^\infty \mathbf{1}_{\{J \le v\}}\, e^{-2 v/\sigma^2}\, dv\)
and Fubini give
\[
N \;=\; \frac{2}{\sigma^2} \int_0^\infty V(v)\, e^{-2 v/\sigma^2}\, dv \;\le\; \frac{2}{\sigma^2} \int_0^\infty \Phi(v)\, e^{-2 v/\sigma^2}\, dv \;=\; \ell^d \int_0^\infty \hat\Phi(s)\, e^{-s}\, ds,
\]
the last equality being the substitution \(s = 2v/\sigma^2\). The
smoothness envelope \(J(Q) \le \tfrac{L}{2}\|Q - Q_\star\|^2\) gives
\(e^{-2 J/\sigma^2} \ge e^{-L\|Q-Q_\star\|^2/\sigma^2}\), so
\[
Z \;\ge\; \int_{\mathbb{R}^d} e^{-L\|Q-Q_\star\|^2/\sigma^2}\, dQ \;=\; \left(\frac{\uppi\sigma^2}{L}\right)^{d/2} \;=\; \ell^d.
\]
Divide.
\end{proof}

\begin{theorem}[exponential-in-\(d\) static mass bound]\label{thm:static-mass}
Suppose the sublevel volume has the canonical
polynomial form \(\Phi(v) = C\,(v - \psi(\alpha))_+^\eta\) for some
\(C > 0\), \(\eta > 0\). Define the dimensionless rate and prefactor
\[
A \;:=\; \log\!\left(\frac{\uppi\sigma^2}{L}\right), \qquad K \;:=\; C\,\Gamma(\eta + 1)\,\left(\frac{\sigma^2}{2}\right)^\eta e^{-\hat\alpha},
\]
with \(\hat\alpha = 2\psi(\alpha)/\sigma^2\). When \(\sigma\) is large
enough that \(\uppi\sigma^2 > L\) (equivalently \(A > 0\)),
\[
\pi(\mathcal{A}_H) \;\le\; K\, e^{-A d/2}.
\]
For fixed \(\sigma\), \(L\), and \(\psi(\alpha)\), the static mass
decays exponentially in the ambient dimension \(d\) at rate \(A/2 > 0\),
with an additional energy-gap suppression
\(e^{-\hat\alpha} = e^{-2\psi(\alpha)/\sigma^2}\) in the prefactor.
\end{theorem}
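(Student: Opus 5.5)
The proof is a direct evaluation of the integral in \cref{prop:static-mass} for the polynomial sublevel volume; nothing beyond that proposition is needed. First, check that \(\Phi(v) = C\,(v-\psi(\alpha))_+^\eta\) is an admissible majorant: it is non-decreasing because \(\eta>0\), and it vanishes for \(v<\psi(\alpha)\). So \cref{prop:static-mass} applies and gives
\[
\pi(\mathcal{A}_H) \le \int_0^\infty \hat\Phi(s)\,e^{-s}\,ds.
\]

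Next, compute \(\hat\Phi\) in the dimensionless variables. Substituting \(v = s\sigma^2/2\) gives
\[
v-\psi(\alpha) = \tfrac{\sigma^2}{2}\,(s-\hat\alpha),
\qquad
\hat\Phi(s) = C\left(\tfrac{\sigma^2}{2}\right)^{\eta}(s-\hat\alpha)_+^{\eta}\,\ell^{-d}.
\]
The integral is then a shifted Gamma integral. With the change of variables \(u = s-\hat\alpha\),
\[
\int_{\hat\alpha}^\infty (s-\hat\alpha)^\eta e^{-s}\,ds = e^{-\hat\alpha}\int_0^\infty u^\eta e^{-u}\,du = e^{-\hat\alpha}\,\Gamma(\eta+1).
\]
Collecting the factors gives \(\pi(\mathcal{A}_H) \le K\,\ell^{-d}\), with \(K\) exactly as defined in the statement.

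Finally, rewrite the volume cell. Since \(\ell^2 = \uppi\sigma^2/L\), we have \(\ell^{-d} = (\uppi\sigma^2/L)^{-d/2} = e^{-Ad/2}\), which yields the claimed bound. The hypothesis \(\uppi\sigma^2 > L\) is used only to make \(A>0\), so that the bound decays in \(d\) rather than grows. The interpretive remarks follow by inspection: for fixed \(\sigma\), \(L\), and \(\psi(\alpha)\), the constant \(K\) does not depend on \(d\), and the factor \(e^{-\hat\alpha}\) inside \(K\) is the energy-gap suppression.

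I do not expect a real obstacle; the argument is bookkeeping. The one step needing care is the scaling of \((v-\psi(\alpha))^\eta\) under the substitution: it must produce exactly \((\sigma^2/2)^\eta\) and not an extra Jacobian. That Jacobian was already absorbed in the proof of \cref{prop:static-mass}, where \((2/\sigma^2)\,dv = ds\). I would also confirm that \(C\) carries no hidden \(d\)-dependence. The statement treats \(C\) as a fixed constant, so no such dependence should be introduced in the proof.
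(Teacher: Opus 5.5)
Your proposal is correct and matches the paper's proof step for step: you apply \cref{prop:static-mass} with \(\hat\Phi(s) = C(\sigma^2/2)^\eta(s-\hat\alpha)_+^\eta/\ell^d\), evaluate the shifted Gamma integral to get \(e^{-\hat\alpha}\Gamma(\eta+1)\), and rewrite \(\ell^{-d} = (\uppi\sigma^2/L)^{-d/2} = e^{-Ad/2}\). Your extra checks are also accurate: \(\Phi\) is an admissible majorant, and the hypothesis \(\uppi\sigma^2 > L\) is needed only to make \(A>0\), not for the inequality itself.
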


\begin{proof}
Apply \cref{prop:static-mass} with
\(\hat\Phi(s) = \hat C\,(s - \hat\alpha)_+^\eta\) and substitute
\(u = s - \hat\alpha\):
\[
\int_0^\infty \hat C\,(s - \hat\alpha)_+^\eta\, e^{-s}\, ds \;=\; \hat C\, e^{-\hat\alpha} \int_0^\infty u^\eta\, e^{-u}\, du \;=\; \hat C\, \Gamma(\eta + 1)\, e^{-\hat\alpha}.
\]
From the dimensionless variables
\(\hat C = C\,(\sigma^2/2)^\eta / \ell^d\) and
\(\ell^d = (\uppi\sigma^2/L)^{d/2} = e^{A d/2}\). Substituting gives
\(\pi(\mathcal{A}_H) \le K\, e^{-A d/2}\).
\end{proof}

\begin{remark}[what is and is not dimension-free]\label{rem:d-dependence}
The decay rate \(A/2\) in \cref{thm:static-mass} is dimension-free, but
the prefactor \(K\) inherits any dimension dependence carried by the
sublevel constant \(C\). The bound is exponential in \(d\) for
families \(\mathcal{A}_H\) whose sublevel data \((C, \eta)\) are \(O(1)\)
in \(d\): failure regions cut out by a bounded number of coordinates, or
a fixed angular fraction of the equilibrium shell, whose geometry, and
hence \(C\) and \(\eta\), does not change with the ambient dimension.
When \(C = C(d)\) grows with \(d\) (for instance a full-dimensional slab),
that growth must be set against \(e^{-A d/2}\), and the statement is read
at fixed geometry with the dimensional accounting carried explicitly by
\(C(d)\). The same caveat applies to the combined bound below.
\end{remark}

\begin{figure}[t]
  \centering
  \footnotesize
  \includegraphics[width=\linewidth]{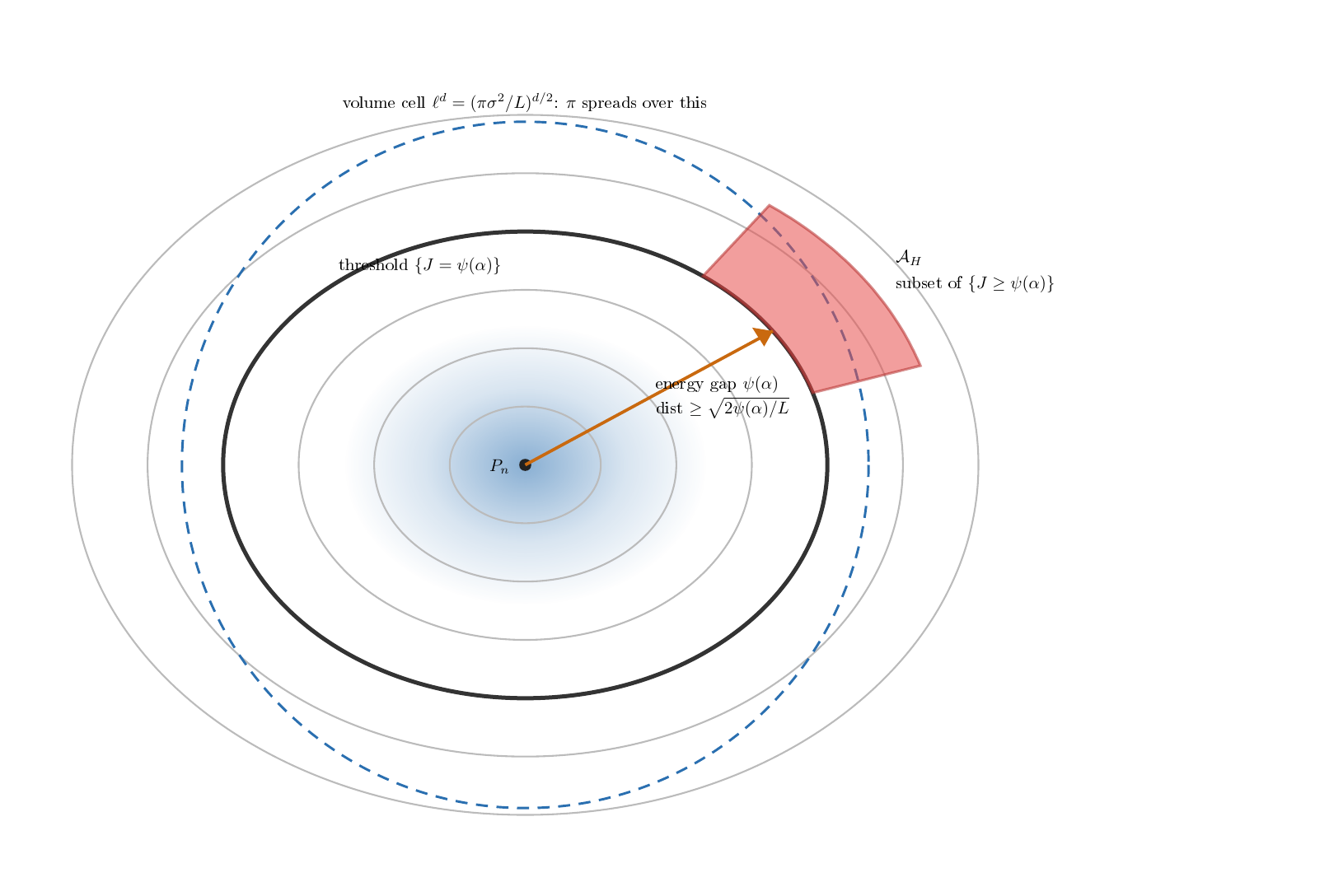}
  \caption{The level-set cap. The equilibrium $\pi$ spreads over the
    volume cell $\ell^d = (\uppi\sigma^2/L)^{d/2}$ (dashed circle), while
    the failure region $\mathcal{A}_H$ sits past the threshold
    $\{J = \psi(\alpha)\}$ at distance $\ge \sqrt{2\psi(\alpha)/L}$ from
    $Q_\star$. Both effects suppress $\pi(\mathcal{A}_H)$: volume spreading
    captures a factor $e^{-Ad/2}$, the energy gap contributes the
    Arrhenius factor $e^{-2\psi/\sigma^2}$.}
  \label{fig:cap}
\end{figure}

\subsection{The small-\(\sigma\) (Arrhenius) regime and the combined bound}\label{sec:the-small-sigma-arrhenius-regime-and-the-combined-bound}

\cref{thm:static-mass} is informative only when
\(\uppi\sigma^2 > L\), that is, when the noise is large or the dimension
high. In the opposite regime a complementary barrier argument applies,
using strong convexity. Since \(\pi\) is \((2m/\sigma^2)\)-strongly
log-concave it has Gaussian tails of width \(\sigma/\sqrt{m}\) around
\(Q_\star\), while smoothness places \(\mathcal{A}_H\) at radius at least
\(r_0 := \sqrt{2\psi(\alpha)/L}\) from \(Q_\star\). The map
\(Q \mapsto \|Q - Q_\star\|\) is \(1\)-Lipschitz with
\(\mathbb{E}_\pi\|Q - Q_\star\| \le \sqrt{d\sigma^2/(2m)}\) (Brascamp-Lieb
bounds the covariance of \(\pi\) by \((\sigma^2/2m)\,I\)), so the Herbst
concentration bound for the \((2m/\sigma^2)\)-log-Sobolev measure \(\pi\)
\cite[Ch.~5]{BGL14} gives the explicit estimate
\[
\pi(\mathcal{A}_H) \;\le\; \exp\!\left(-\frac{m}{\sigma^2}\Big(r_0 - \sqrt{\tfrac{d\sigma^2}{2m}}\Big)_{\!+}^{2}\right).
\]
When the barrier dominates the dimension, \(\psi(\alpha) > dL\sigma^2/(4m)\)
(equivalently \(r_0 > \sqrt{d\sigma^2/(2m)}\)), this is the bare Arrhenius
rate \(2\psi(\alpha)/\sigma^2\) shaved by the condition number
\(m/L \in (0,1]\),
\[
\pi(\mathcal{A}_H) \;\le\; \exp\!\left(-\frac{2(m/L)\,\psi(\alpha)}{\sigma^2}\Big(1 - \sqrt{\tfrac{dL\sigma^2}{4m\psi(\alpha)}}\,\Big)^{2}\right),
\]
the dimensional correction in parentheses tending to \(1\) as
\(\psi(\alpha)\big/\big(dL\sigma^2/(4m)\big) \to \infty\). Folding the two bounds together
through \(\min(e^{-a}, e^{-b}) = e^{-\max(a,b)}\) and defining the
\textbf{effective dimensional rate}
\[
A_{\mathrm{eff}} \;:=\; \max\!\left(A,\;\; \frac{4m\,\psi(\alpha)}{L\,d\,\sigma^2}\right),
\]
gives the \textbf{combined static bound}
\[
\pi(\mathcal{A}_H) \leq e^{-A_{\mathrm{eff}}\, d/2}.
\]
In the large-\(\sigma\) or large-\(d\) regime, \(A_{\mathrm{eff}} = A\)
and spreading does the work. In the small-\(\sigma\) regime
\(\sigma^2 \ll 4m\psi(\alpha)/(dL)\),
\(A_{\mathrm{eff}}\, d/2 = 2m\psi(\alpha)/(L\sigma^2)\) and the energy
barrier does the work. The two regimes meet at the crossover
\(Ad/2 \approx 2m\psi(\alpha)/(L\sigma^2)\), where the bound is least
tight, and this crossover coincides with the burn-in scale of Section 5.

\section{Dynamic mass control along the trajectory}\label{sec:dynamic-mass-control-along-the-trajectory}

The static bound is a statement at stationarity (\(t \to \infty\)). For
finite \(t\) we control
\(\nu_t(\mathcal{A}_H) = \mathbb{P}(Q_t \in \mathcal{A}_H)\) directly.
The main result of this section assumes only the total mass
\(\pi(\mathcal{A}_H)\), together with strong convexity and a finite
initial chi-squared divergence. It is the dimension-free, shape-free
bound. Its one weakness, a transient window in which it is
uninformative, is the swelling phenomenon of the introduction,
and it is shortened in Section 6 by adding geometric information.

\subsection{The common machinery}\label{sec:the-common-machinery}

All bounds in this section and the next track the density ratio
\[
u_t := \frac{\nu_t}{\pi}.
\]
Substituting \(\nu_t = u_t \pi\) into \cref{eq:fokker-planck} and using
the identity \(\nabla \pi = -\frac{2}{\sigma^2}\, \pi\, \nabla J\) (the
chain rule applied to \(\pi \propto e^{-2J/\sigma^2}\)), the
time-independent factors cancel and \(u_t\) solves the backward
Kolmogorov equation
\[
\partial_t u_t = \mathcal{L} u_t, \qquad \mathcal{L} = \frac{\sigma^2}{2}\,\Delta - \nabla J \cdot \nabla,
\]
where \(\mathcal{L}\) is the Langevin generator. Integration by parts
against \(\pi\) shows \(\mathcal{L}\) is self-adjoint in \(L^2(\pi)\)
with Dirichlet form
\[
\mathcal{E}(f) \;=\; -\langle f, \mathcal{L} f\rangle_\pi \;=\; \frac{\sigma^2}{2}\int |\nabla f|^2\, d\pi,
\]
and generates a conservative Markov semigroup \(P_t = e^{t\mathcal{L}}\)
with \(P_t 1 = 1\). These are the standard generator, reversibility, and
Dirichlet-form facts for the Langevin diffusion; see Pavliotis \cite[Ch.~4]{Pav14} and \cite[Sec.~1.6 and 4.2]{BGL14}. Every bound below converts a
statement about \(u_t - 1\) into a statement about the set mass by
pairing against the indicator \(\mathbf{1}_{\mathcal{A}_H}\).

\subsection{The chi-squared spectral-gap bound}\label{sec:the-chi-squared-spectral-gap-bound}

\begin{theorem}[mass control along the trajectory]\label{thm:dynamic-mass}
Assume \(m\,I \preceq \nabla^2 J\) and let \(\nu_0\)
have \(\chi_0^2 < \infty\). Then for every measurable
\(\mathcal{A}_H \subseteq \mathbb{R}^d\) and every \(t \ge 0\),

\begin{equation}\label{eq:star}
\mathbb{P}(Q_t \in \mathcal{A}_H) \;\le\; \pi(\mathcal{A}_H)\,\Big(1 + \sqrt{\chi_0^2/\pi(\mathcal{A}_H)}\; e^{-mt}\Big).
\end{equation}

The same argument gives the symmetric lower bound
\(\nu_t(\mathcal{A}_H) \ge \pi(\mathcal{A}_H) - \sqrt{\chi_0^2\,\pi(\mathcal{A}_H)}\,e^{-mt}\),
so \(\nu_t(\mathcal{A}_H) \to \pi(\mathcal{A}_H)\) at exponential rate
\(m\).
\end{theorem}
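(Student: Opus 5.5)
The plan is to work entirely with the density ratio \(u_t = \nu_t/\pi\) from \cref{sec:the-common-machinery}, which solves \(\partial_t u_t = \mathcal{L} u_t\). Writing the set mass as a pairing, \(\nu_t(\mathcal{A}_H) = \langle u_t, \mathbf{1}_{\mathcal{A}_H}\rangle_\pi\), and using \(\int (u_t - 1)\,d\pi = 0\) (conservation of mass, \(P_t 1 = 1\)), I will subtract the equilibrium value and center the indicator at no cost:
\[
\nu_t(\mathcal{A}_H) - \pi(\mathcal{A}_H) \;=\; \big\langle u_t - 1,\; \mathbf{1}_{\mathcal{A}_H} - \pi(\mathcal{A}_H)\big\rangle_\pi .
\]
Cauchy--Schwarz in \(L^2(\pi)\) then bounds the absolute value by \(\|u_t - 1\|_{L^2(\pi)}\) times \(\|\mathbf{1}_{\mathcal{A}_H} - \pi(\mathcal{A}_H)\|_{L^2(\pi)} = \sqrt{\pi(\mathcal{A}_H)(1-\pi(\mathcal{A}_H))} \le \sqrt{\pi(\mathcal{A}_H)}\).

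The second ingredient is the \(L^2(\pi)\) contraction \(\|u_t - 1\|_{L^2(\pi)} \le \sqrt{\chi_0^2}\,e^{-mt}\). I would derive it from a Poincaré inequality for \(\pi\) with respect to the Dirichlet form \(\mathcal{E}\). Since \(\pi \propto e^{-2J/\sigma^2}\) with \(\nabla^2(2J/\sigma^2) \succeq (2m/\sigma^2) I\), Brascamp--Lieb (or Bakry--Émery) gives
\[
\mathrm{Var}_\pi(f) \le \frac{\sigma^2}{2m}\int|\nabla f|^2\,d\pi = \frac{1}{m}\,\mathcal{E}(f).
\]
So \(\mathcal{L}\) has spectral gap at least \(m\). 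Differentiating,
\[
\frac{d}{dt}\|u_t - 1\|^2_{L^2(\pi)} = -2\mathcal{E}(u_t) \le -2m\|u_t-1\|^2_{L^2(\pi)},
\]
and Grönwall gives the contraction, because \(\|u_0 - 1\|^2_{L^2(\pi)} = \chi_0^2\).

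Combining the two steps gives \(|\nu_t(\mathcal{A}_H) - \pi(\mathcal{A}_H)| \le \sqrt{\chi_0^2\,\pi(\mathcal{A}_H)}\,e^{-mt}\). Factoring out \(\pi(\mathcal{A}_H)\) yields \cref{eq:star}, and the other sign of the same inequality yields the lower bound and the convergence at rate \(m\).

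The only step needing care is justifying the energy identity for a general initial law with merely \(\chi_0^2<\infty\). The differentiation presumes enough regularity and decay of \(u_t\) to integrate by parts. I would sidestep this with the spectral theorem. The operator \(\mathcal{L}\) is self-adjoint on \(L^2(\pi)\) with spectrum in \(\{0\}\cup(-\infty,-m]\), and the constants span the kernel. Hence \(P_t\) restricted to mean-zero functions has norm at most \(e^{-mt}\), and \(u_t - 1 = P_t(u_0 - 1)\) with \(u_0 - 1\) mean-zero. This argument needs no regularity of \(u_0\) and no smoothness of \(\mathbf{1}_{\mathcal{A}_H}\), which only appears inside the inner product.
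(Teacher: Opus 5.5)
Your proposal is correct and follows the paper's proof essentially step for step. Like the paper, you get \(\|u_t-1\|_{L^2(\pi)}\le\sqrt{\chi_0^2}\,e^{-mt}\) from the Brascamp--Lieb Poincar\'e inequality with constant \(1/m\) plus Gr\"onwall, then pair \(u_t-1\) against the centered indicator and apply Cauchy--Schwarz with the Bernoulli variance \(\pi(\mathcal{A}_H)(1-\pi(\mathcal{A}_H))\le\pi(\mathcal{A}_H)\). Your spectral-theorem justification for a rough \(u_0\) is a harmless, slightly more careful variant of the paper's differentiation of \(\|P_t g\|_{L^2(\pi)}^2\) for centered \(g\).
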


\begin{proof}
Steps 1 to 3 are the standard machinery of reversible diffusions, for
which we refer to Pavliotis \cite[Ch.~4]{Pav14}; Steps 4 to 6 are the
explicit estimate.

\emph{Step 1 (change of variables).} As recorded in Section 5.1,
\(u_t = \nu_t/\pi\) solves the backward equation
\(\partial_t u_t = \mathcal{L} u_t\), obtained by substituting
\(\nu_t = u_t\pi\) into \cref{eq:fokker-planck} and cancelling the
stationary part using \(\nabla\pi = -\frac{2}{\sigma^2}\pi\nabla J\)
(\cite[Ch.~4]{Pav14}).

\emph{Step 2 (self-adjointness).} Integration by parts against \(\pi\)
makes \(\mathcal{L}\) self-adjoint in \(L^2(\pi)\) with Dirichlet form
\(\mathcal{E}(f) = \frac{\sigma^2}{2}\int|\nabla f|^2\,d\pi\), and
\(P_t = e^{t\mathcal{L}}\) is a Markov semigroup with \(P_t 1 = 1\)
(Section 5.1; \cite[Ch.~4]{Pav14}).

\emph{Step 3 (Poincaré inequality).} Brascamp-Lieb applied to
\(\pi \propto e^{-2J/\sigma^2}\), whose potential has Hessian
\(\nabla^2(2J/\sigma^2) \succeq \frac{2m}{\sigma^2}\,I\), gives
\[
\operatorname{Var}_\pi(f) \;\le\; \frac{1}{m}\,\mathcal{E}(f),
\]
so the spectral gap of \(-\mathcal{L}\) on \(L^2(\pi)\) is at least
\(m\) (see \cite{BL76}, the Bakry-Émery \(\Gamma_2\) derivation
\cite{BE85}, \cite[Sec.~4.8-4.9]{BGL14}, or the textbook statement
\cite[Ch.~4]{Pav14}). The two factors of \(2/\sigma^2\), one from the
log-concavity constant and one from the Dirichlet form, cancel, which is
why the rate is \(m\) and not \(2m/\sigma^2\).

\emph{Step 4 (\(L^2\) contraction).} For any centered
\(g \in L^2(\pi)\), differentiate the squared norm along the semigroup
and use, in order, self-adjointness, the Dirichlet form, and Step 3 (the
centering \(\int P_t g\,d\pi = \int g\,d\pi = 0\) is preserved, so
Poincaré applies to \(P_t g\)):
\[
\frac{d}{dt}\,\|P_t g\|_{L^2(\pi)}^2 \;=\; 2\langle P_t g,\, \mathcal{L} P_t g\rangle_\pi \;=\; -2\,\mathcal{E}(P_t g) \;\le\; -2m\,\|P_t g\|_{L^2(\pi)}^2.
\]
Grönwall's inequality gives
\(\|P_t g\|_{L^2(\pi)} \le e^{-mt}\|g\|_{L^2(\pi)}\).

\emph{Step 5 (apply to \(u_0 - 1\)).} The constant \(1\) is fixed by
\(\mathcal{L}\), so \(u_t - 1 = P_t(u_0 - 1)\). It is centered,
\(\int(u_0 - 1)\,d\pi = \int \nu_0\,dQ - 1 = 0\), with
\(\|u_0 - 1\|_{L^2(\pi)}^2 = \int(\nu_0/\pi - 1)^2\,d\pi = \chi_0^2\).
Step 4 gives

\begin{equation}\label{eq:contraction}
\|u_t - 1\|_{L^2(\pi)} \;\le\; e^{-mt}\,\|u_0 - 1\|_{L^2(\pi)} \;=\; \sqrt{\chi_0^2}\; e^{-mt}.
\end{equation}

\emph{Step 6 (from \(L^2\) decay to set mass).} Since
\(\int(u_t - 1)\,d\pi = 0\) we may subtract the constant
\(\pi(\mathcal{A}_H)\) inside the pairing for free, replacing the
indicator by the centered indicator:
\[
\nu_t(\mathcal{A}_H) - \pi(\mathcal{A}_H) \;=\; \int_{\mathcal{A}_H}(u_t - 1)\, d\pi \;=\; \langle u_t - 1,\; \mathbf{1}_{\mathcal{A}_H} - \pi(\mathcal{A}_H)\rangle_\pi.
\]
Centering tightens Cauchy-Schwarz: the second factor becomes the
variance of a Bernoulli\((\pi(\mathcal{A}_H))\) variable,
\(\|\mathbf{1}_{\mathcal{A}_H} - \pi(\mathcal{A}_H)\|_{L^2(\pi)}^2 = \pi(\mathcal{A}_H)(1-\pi(\mathcal{A}_H)) \le \pi(\mathcal{A}_H)\),
rather than the second moment \(\pi(\mathcal{A}_H)\), which for a rare
set is dramatically smaller. With \cref{eq:contraction},
\[
\big|\nu_t(\mathcal{A}_H) - \pi(\mathcal{A}_H)\big| \;\le\; \|u_t - 1\|_{L^2(\pi)}\,\sqrt{\pi(\mathcal{A}_H)(1-\pi(\mathcal{A}_H))} \;\le\; \sqrt{\chi_0^2\,\pi(\mathcal{A}_H)}\; e^{-mt}.
\]
Dividing the upper inequality by \(\pi(\mathcal{A}_H)\) yields
\cref{eq:star}; keeping the sign gives the symmetric lower bound.
\end{proof}

The centering in Step 6 is the ``warm-start'' device (\cite{LS93},
\cite{LV07}, \cite{VW19}, \cite{CELSZ22}). The analytic-PDE route to the same \(L^2\) decay is in
\cite{MV00}, \cite{AMTU01}, and a sampling-oriented exposition is in
\cite{Che24}.

\begin{remark}[beyond strong convexity: a Poincar\'e inequality suffices]\label{rem:poincare-only}
Strong convexity enters \cref{thm:dynamic-mass}, and
\cref{thm:flux-isolation} below, only through the spectral gap: the
constant \(m\) is used nowhere except as the Poincar\'e constant of
\(\pi\) in Step 3. Both bounds therefore hold verbatim for any potential
\(J\) whose Gibbs measure obeys a Poincar\'e inequality
\(\operatorname{Var}_\pi(f) \le \lambda_P^{-1}\,\mathcal{E}(f)\), with
\(m\) replaced by \(\lambda_P\) throughout, including the burn-in time
and the local-rate speedup of \cref{thm:flux-isolation}. Standard
devices produce such a \(\lambda_P\) without global strong convexity. If
\(J = J_0 + b\) with \(J_0\) strongly convex and \(b\) bounded, the
Holley-Stroock perturbation lemma \cite{HS87} preserves the inequality
with \(\lambda_P \ge m\, e^{-\operatorname{osc}(2b/\sigma^2)}\), where
\(\operatorname{osc}(g) = \sup g - \inf g\); this covers potentials that
are strongly convex up to a bounded, possibly nonconvex, well. If \(J\)
is only convex at infinity (strongly convex outside a compact set), a
Lyapunov or drift-condition argument yields a positive \(\lambda_P\) as
well \cite{CG14}. The static bound of \cref{thm:static-mass} is the one
result that leans on smoothness and strong log-concavity more
essentially; under a bounded perturbation \(b\) its prefactor picks up
the same factor \(e^{\operatorname{osc}(2b/\sigma^2)}\).
\end{remark}

\subsection*{Explicit \(\chi_0^2\) for a Gaussian start} The constant
\(\chi_0^2\) is finite under mild conditions. If
\(\nu_0 = N(Q_\star, \sigma_0^2 I)\) and
\(m I \preceq \nabla^2 J \preceq L I\), then writing the requirement
that the start be no wider than \(\pi\) allows, one has the closed-form
envelope
\[
\chi_0^2 + 1 \;\le\; \left(\frac{\sigma_0^2}{s_-^2}\right)^{-d/2}\!\!\left(2 - \frac{\sigma_0^2}{s_+^2}\right)^{-d/2}, \qquad s_-^2 := \frac{\sigma^2}{2m},\;\; s_+^2 := \frac{\sigma^2}{2L},
\]
finite if and only if \(\sigma_0^2 < 2 s_+^2 = \sigma^2/L\). In
particular \(\log \chi_0^2 = O(d)\), so \(\chi_0^2\) contributes at most
an \(O(d)\) term to the burn-in below. (The clean Gaussian-Gaussian case
\(J = \tfrac{a}{2}\|Q-Q_\star\|^2\) gives equality, and \(\chi_0^2 = 0\)
when \(\sigma_0^2 = \sigma^2/(2a)\), i.e.~the start equals \(\pi\).)

\subsection{The maximum-principle ceiling}\label{sec:the-maximum-principle-ceiling}

If the initial law has a bounded density ratio, a second, purely
uniform-in-time bound is available, and it complements \cref{eq:star} at
small \(t\).

\begin{proposition}[maximum-principle ceiling]\label{prop:max-principle}
If \(M := \|\nu_0/\pi\|_\infty < \infty\), then
for every measurable \(\mathcal{A}_H\) and all \(t \ge 0\),
\[
\nu_t(\mathcal{A}_H) \;\le\; M\, \pi(\mathcal{A}_H).
\]
\end{proposition}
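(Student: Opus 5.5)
The plan is to work with the density ratio \(u_t = \nu_t/\pi\) of Section 5.1, which solves \(\partial_t u_t = \mathcal{L} u_t\), so that \(u_t = P_t u_0\). The claim will follow from the fact that \(P_t\) is a Markov semigroup, hence positivity preserving and conservative (\(P_t 1 = 1\)). This is exactly the maximum principle for the backward Kolmogorov equation. Concretely, since \(0 \le u_0 \le M\) \(\pi\)-a.e., positivity applied to \(M - u_0 \ge 0\) gives
\[
P_t u_0 \le P_t M = M P_t 1 = M .
\]
Thus \(\|u_t\|_{L^\infty(\pi)} \le M\) for all \(t \ge 0\).

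The set-mass bound is then immediate by integrating against the indicator:
\[
\nu_t(\mathcal{A}_H) = \int_{\mathcal{A}_H} u_t\, d\pi \le M\,\pi(\mathcal{A}_H).
\]
No centering or Cauchy-Schwarz is needed. No spectral gap is used either, so strong convexity plays no role here beyond well-posedness of the dynamics.

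The one point needing care is justifying positivity of \(P_t\) on \(L^\infty(\pi)\), since \(u_0\) need only be bounded and measurable, not smooth. I would handle this probabilistically rather than through a PDE maximum principle on an unbounded domain. By reversibility of \(\pi\) (Section 5.1, \cite[Ch.~4]{Pav14}), for any bounded measurable \(f \ge 0\),
\[
\int f\, d\nu_t = \int (P_t f)\, u_0\, d\pi .
\]
Here \(P_t f(Q) = \mathbb{E}_Q f(Q_t)\) is manifestly nonnegative and bounded by \(\|f\|_\infty\). Taking \(f = \mathbf{1}_{\mathcal{A}_H}\) and using \(u_0 \le M\) gives
\[
\nu_t(\mathcal{A}_H) \le M \int P_t \mathbf{1}_{\mathcal{A}_H}\, d\pi = M\,\pi(\mathcal{A}_H),
\]
where the last equality is invariance of \(\pi\). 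This duality route sidesteps any regularity issue. I expect the only obstacle to be citing reversibility and invariance cleanly for the non-Dirichlet-domain function \(\mathbf{1}_{\mathcal{A}_H}\), which is standard once \(P_t\) is realized as the transition semigroup of the SDE.
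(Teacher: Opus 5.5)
Your main argument is the paper's proof: positivity of \(P_t\) and \(P_t 1 = 1\) give \(0 \le u_t = P_t u_0 \le M\) \(\pi\)-a.e., and integrating over \(\mathcal{A}_H\) gives the ceiling. Your duality check is also correct, and it needs only the Markov identity \(\int f\,d\nu_t = \int P_t f\,d\nu_0\) plus invariance of \(\pi\), not reversibility; this is exactly why the ceiling transfers to MALA but not to unadjusted Langevin Monte Carlo in \cref{sec:discrete-time}.
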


\begin{proof}
Since \(P_t\) is positivity-preserving with
\(P_t 1 = 1\), applying it to \(u_0 \ge 0\) gives
\(0 \le u_t = P_t u_0 \le \|u_0\|_\infty = M\) pointwise \(\pi\)-a.e.
Integrating against \(\pi\) over \(\mathcal{A}_H\),
\(\nu_t(\mathcal{A}_H) = \int_{\mathcal{A}_H} u_t\, d\pi \le M\,\pi(\mathcal{A}_H)\).
\end{proof}

Combining \cref{prop:max-principle} with
\cref{thm:dynamic-mass}, and using
\(\|u_0 - 1\|_{L^2(\pi)}^2 \le \|u_0\|_\infty \int u_0\, d\pi - 1 = M - 1\),
gives the two-sided sandwich with
\(\delta_t := \sqrt{(M-1)\,\pi(\mathcal{A}_H)(1-\pi(\mathcal{A}_H))}\,e^{-mt}\),
\[
\pi(\mathcal{A}_H) - \delta_t \;\le\; \nu_t(\mathcal{A}_H) \;\le\; \min\!\Big(M\,\pi(\mathcal{A}_H),\;\; \pi(\mathcal{A}_H) + \delta_t\Big).
\]
The two upper bounds cross at
\(t_\star = \frac{1}{2m}\log\frac{1-\pi(\mathcal{A}_H)}{(M-1)\pi(\mathcal{A}_H)}\):
the ceiling \(M\pi(\mathcal{A}_H)\) is tighter for \(t \le t_\star\),
the relaxation bound for \(t \ge t_\star\). The ceiling requires
\(M < \infty\), which fails for a point-mass start, and this failure is
not cosmetic: it is the case where swelling is real (Section 7).

\subsection{Burn-in time and the combined safety bound}\label{sec:burn-in-time-and-the-combined-safety-bound}

The bound \cref{eq:star} is informative once its transient term is at
most order one:
\[
\sqrt{\chi_0^2/\pi(\mathcal{A}_H)}\; e^{-mt} \;\le\; 1 \quad\Longleftrightarrow\quad t \;\ge\; \frac{1}{2m}\log\frac{\chi_0^2}{\pi(\mathcal{A}_H)}.
\]
Below this threshold the bound can exceed \(1\) and is vacuous; this is
the swelling window. Substituting the static bound
\(\pi(\mathcal{A}_H) \le K\, e^{-A d/2}\) from
\cref{thm:static-mass} gives the explicit burn-in time
\[
t_\star \;:=\; \frac{1}{2m}\log\frac{\chi_0^2}{K} \;+\; \frac{A}{4m}\, d.
\]
So \(t_\star\) grows linearly in \(d\) at rate \(A/(4m)\), plus a
\(d\)-independent offset set by \(\chi_0^2\) and \(K\). For
\(t \ge t_\star\), combining \cref{eq:star} with the static bound gives the {combined safety bound}
\[
\mathbb{P}(Q_t \in \mathcal{A}_H) \;\le\; 2\,\pi(\mathcal{A}_H) \;\le\; 2 K\, e^{-A d/2},
\]
exponentially small in \(d\) at rate \(A/2\). In words: after a burn-in
of order \(d\), the Langevin trajectory inherits the dimensionally
suppressed safety of the stationary law. This burn-in is
\(\Theta(d)\) in the worst case, and
\cref{sec:sharpness-and-worked-example-ornstein-uhlenbeck-and-the-shell}
shows the transient overshoot inside the window can reach
\(e^{\Theta(d)}\). That is unavoidable for the shape-free bound; it is
what the local rate \(\lambda_{\mathcal{A}_H}\) of
\cref{sec:geometric-isolation-shortening-the-burn-in}, which shrinks the
window by \(m/\lambda_{\mathcal{A}_H}\), and the maximum-principle
ceiling \cref{prop:max-principle}, which removes it entirely for a start
with bounded density ratio, are there to control.

\subsection{From continuous to discrete time}\label{sec:discrete-time}

Step 6 of the proof of \cref{thm:dynamic-mass} is a statement about a
fixed pair of measures, not about the semigroup, so it transfers
verbatim to discrete time.

\begin{lemma}[set-mass conversion]\label{lem:conversion}
For any probability measure \(\nu \ll \pi\) and any measurable
\(\mathcal{A}_H\),
\[
\big|\nu(\mathcal{A}_H) - \pi(\mathcal{A}_H)\big| \;\le\; \sqrt{\chi^2(\nu \,\|\, \pi)\; \pi(\mathcal{A}_H)\,\big(1 - \pi(\mathcal{A}_H)\big)}.
\]
\end{lemma}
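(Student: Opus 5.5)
The plan is to repeat Step 6 of the proof of \cref{thm:dynamic-mass} for a single fixed measure $\nu$; no semigroup is needed. Set $u := d\nu/d\pi$, which exists by $\nu \ll \pi$. If $\chi^2(\nu\,\|\,\pi) = \infty$ there is nothing to prove, so assume it is finite. Then $u - 1 \in L^2(\pi)$ with $\|u-1\|_{L^2(\pi)}^2 = \chi^2(\nu\,\|\,\pi)$. Also $\int (u-1)\,d\pi = \nu(\mathbb{R}^d) - 1 = 0$, since both $\nu$ and $\pi$ are probability measures.

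Next write the difference of set masses as a pairing. We have $\nu(\mathcal{A}_H) - \pi(\mathcal{A}_H) = \int_{\mathcal{A}_H}(u-1)\,d\pi = \langle u-1, \mathbf{1}_{\mathcal{A}_H}\rangle_\pi$. Because $u-1$ is centered, we may subtract the constant $\pi(\mathcal{A}_H)$ from the indicator without changing the pairing:
\[
\nu(\mathcal{A}_H) - \pi(\mathcal{A}_H) = \big\langle u-1,\; \mathbf{1}_{\mathcal{A}_H} - \pi(\mathcal{A}_H)\big\rangle_\pi .
\]
Cauchy--Schwarz in $L^2(\pi)$ then bounds the absolute value by $\|u-1\|_{L^2(\pi)}\,\|\mathbf{1}_{\mathcal{A}_H} - \pi(\mathcal{A}_H)\|_{L^2(\pi)}$.

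It remains to compute the second factor. Since $\mathbf{1}_{\mathcal{A}_H}^2 = \mathbf{1}_{\mathcal{A}_H}$, we get $\|\mathbf{1}_{\mathcal{A}_H} - \pi(\mathcal{A}_H)\|_{L^2(\pi)}^2 = \pi(\mathcal{A}_H) - 2\pi(\mathcal{A}_H)^2 + \pi(\mathcal{A}_H)^2 = \pi(\mathcal{A}_H)(1-\pi(\mathcal{A}_H))$. This is the variance of a Bernoulli variable with parameter $\pi(\mathcal{A}_H)$. Substituting this and $\|u-1\|_{L^2(\pi)} = \sqrt{\chi^2(\nu\,\|\,\pi)}$ gives exactly the claimed inequality.

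There is no real obstacle. The only point needing care is integrability, namely that $\langle u-1, \cdot\rangle_\pi$ is a genuine $L^2$ pairing; this is handled by the case split on whether $\chi^2(\nu\,\|\,\pi)$ is finite.
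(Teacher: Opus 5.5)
Your proof is correct and matches the paper's argument exactly: you write $\nu(\mathcal{A}_H)-\pi(\mathcal{A}_H)$ as the $L^2(\pi)$ pairing of $d\nu/d\pi-1$ with the centered indicator, apply Cauchy--Schwarz, and evaluate the Bernoulli variance. Your case split on finiteness of $\chi^2$ is a detail the paper leaves implicit; in the degenerate subcase $\pi(\mathcal{A}_H)\in\{0,1\}$ with $\chi^2=\infty$ the right side is $\infty\cdot 0$, but the left side is then $0$ because $\nu\ll\pi$, so the inequality still holds.
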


\begin{proof}
As in Step 6:
\(\nu(\mathcal{A}_H) - \pi(\mathcal{A}_H) = \langle \nu/\pi - 1,\; \mathbf{1}_{\mathcal{A}_H} - \pi(\mathcal{A}_H)\rangle_\pi\),
and Cauchy-Schwarz bounds the pairing by
\(\sqrt{\chi^2(\nu\,\|\,\pi)}\) times the Bernoulli standard deviation.
\end{proof}

\begin{corollary}[discrete-time mass control]\label{cor:discrete}
Let \(\nu_k\) be the law of the \(k\)-th iterate of the Langevin Monte
Carlo algorithm
\(Q_{k+1} = Q_k - h\,\nabla J(Q_k) + \sqrt{h}\,\sigma\,\xi_k\) with
\(\xi_k \sim N(0, I)\) i.i.d., and suppose the chain satisfies a
chi-squared (equivalently R\'enyi-2) convergence bound
\(\chi^2(\nu_k \,\|\, \pi) \le \varepsilon^2(k)\). Then for every
measurable \(\mathcal{A}_H\) and every \(k \ge 0\),
\[
\nu_k(\mathcal{A}_H) \;\le\; \pi(\mathcal{A}_H) + \varepsilon(k)\,\sqrt{\pi(\mathcal{A}_H)}.
\]
\end{corollary}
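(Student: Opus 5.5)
The corollary follows from \cref{lem:conversion} applied one iterate at a time. The conversion step uses nothing about the semigroup, so the discrete chain contributes only through the assumed chi-squared bound.

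Fix \(k \ge 0\) and set \(\nu = \nu_k\). First check the hypothesis \(\nu_k \ll \pi\). If \(\chi^2(\nu_k\,\|\,\pi) \le \varepsilon^2(k) < \infty\), absolute continuity is built into finiteness of the divergence, with the convention \(\chi^2 = \infty\) otherwise. So whenever the bound is nontrivial, \(\nu_k \ll \pi\). If \(\varepsilon(k) = \infty\), the claimed inequality holds trivially whenever \(\pi(\mathcal{A}_H) > 0\). In the degenerate case \(\pi(\mathcal{A}_H) = 0\), \(\mathcal{A}_H\) is Lebesgue-null. For \(h > 0\), each Gaussian-noise step gives \(\nu_k\) a Lebesgue density for \(k \ge 1\), so \(\nu_k(\mathcal{A}_H) = 0\). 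For \(k = 0\) we need \(\nu_0 \ll \pi\), which is exactly what a finite \(\varepsilon(0)\) supplies. I expect this bookkeeping to be the only point needing care.

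Then invoke \cref{lem:conversion} and keep only the upper side of the absolute value:
\[
\nu_k(\mathcal{A}_H) \;\le\; \pi(\mathcal{A}_H) + \sqrt{\chi^2(\nu_k\,\|\,\pi)}\,\sqrt{\pi(\mathcal{A}_H)\big(1-\pi(\mathcal{A}_H)\big)}.
\]
Bound the chi-squared factor by \(\varepsilon(k)\) using the assumption, and drop \(1-\pi(\mathcal{A}_H) \le 1\). This gives \(\nu_k(\mathcal{A}_H) \le \pi(\mathcal{A}_H) + \varepsilon(k)\sqrt{\pi(\mathcal{A}_H)}\).

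There is no real obstacle here, since all the analytic content lives in the assumed chi-squared convergence of the chain. I would add a remark on that point. The invariant law of the discrete chain is biased relative to \(\pi\), so \(\varepsilon(k)\) cannot tend to zero. In the cited analyses it instead decays to a step-size-dependent floor. Hence the bound is informative only while \(\varepsilon(k)^2 \lesssim \pi(\mathcal{A}_H)\). This parallels the burn-in discussion, with the bias floor now limiting how small a set can be certified.
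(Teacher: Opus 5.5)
Your proposal is correct and follows the paper's route: the corollary is an immediate application of \cref{lem:conversion}, bounding $\chi^2(\nu_k\,\|\,\pi)$ by $\varepsilon^2(k)$ and dropping the factor $1-\pi(\mathcal{A}_H)\le 1$. Your absolute-continuity bookkeeping (including the Lebesgue-density argument for $k\ge 1$) goes beyond what the paper states but is sound, and your bias-floor remark agrees with the paper's discussion following the corollary.
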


Bounds of this form are available under the standing
assumptions: for \(m I \preceq \nabla^2 J \preceq L I\) and step size
\(h \lesssim 1/L\), the chain contracts in R\'enyi-2 divergence at a
geometric rate of order \(mh\) per step, up to an additive bias floor
that vanishes as \(h \to 0\) \cite{VW19, EHZ22, CELSZ22}; see
\cite{Che24} for an exposition. With any such bound, the burn-in
analysis of Section 5.3 carries over with \(mt\) replaced by the
discrete exponent and with \(\pi(\mathcal{A}_H)\) augmented by the bias
floor.

Langevin Monte Carlo does not, however, leave \(\pi\)
invariant, so the maximum-principle ceiling
(\cref{prop:max-principle}) does not transfer: that argument uses
stationarity of \(\pi\) under the semigroup. For the
Metropolis-adjusted algorithm (MALA), which is \(\pi\)-reversible, both
the ceiling and the spectral-gap contraction hold verbatim, with the
spectral gap of the transition kernel in place of \(m\). The
local rate of Section 6 also does not transfer as stated: it is defined
through the spectral measure of the continuous generator, and its
discrete analogue (the spectral measure of
the centered indicator under the kernel) makes sense for MALA but is
not developed here.

\section{Geometric isolation: shortening the burn-in}\label{sec:geometric-isolation-shortening-the-burn-in}

\cref{thm:dynamic-mass} uses the global Poincaré constant \(m\)
and only the total mass \(\pi(\mathcal{A}_H)\). It pays for this
generality with the burn-in window, during which it cannot exclude
swelling. The window is genuine for sets that lie on the transport path
from \(\nu_0\) to \(\pi\) (Section 7), but for many failure regions of
interest it is an artifact of discarding the geometry of
\(\mathcal{A}_H\). This section restores that geometry through a local
relaxation rate \(\lambda_{\mathcal{A}_H} \ge m\). The resulting bound
relaxes at this faster, set-dependent rate, shortening the burn-in by
the factor \(m/\lambda_{\mathcal{A}_H}\), and combined with the
maximum-principle ceiling it caps the trajectory mass uniformly in time.
\Cref{sec:taxonomy-proofs} proves the resulting three-case taxonomy
exactly for the quadratic prototype.

\subsection{A pinned-down local relaxation rate}\label{sec:a-pinned-down-local-relaxation-rate}

The informal idea is that mass cannot enter \(\mathcal{A}_H\) faster
than it can cross the boundary \(\partial\mathcal{A}_H\), so a
well-isolated set should relax faster than the global rate \(m\). Making
this precise requires care, because the obvious candidate fails: the
Rayleigh quotient of the centered indicator
\(\phi_{\mathcal{A}_H} := \mathbf{1}_{\mathcal{A}_H} - \pi(\mathcal{A}_H)\)
is
\[
\frac{\mathcal{E}(\phi_{\mathcal{A}_H})}{\operatorname{Var}_\pi(\phi_{\mathcal{A}_H})}, \qquad \mathcal{E}(\phi_{\mathcal{A}_H}) = \frac{\sigma^2}{2}\int |\nabla \mathbf{1}_{\mathcal{A}_H}|^2\, d\pi = +\infty,
\]
since \(\nabla \mathbf{1}_{\mathcal{A}_H}\) is the surface measure on
\(\partial\mathcal{A}_H\). An indicator is not in the domain of the
Dirichlet form, so a ``restricted Poincaré constant'' over step
functions is vacuous. The right object is not the Dirichlet energy of
the indicator but the rate at which the semigroup relaxes it.

\begin{definition}[local relaxation rate]\label{def:local-rate}
For
measurable \(\mathcal{A}_H\) with \(0 < \pi(\mathcal{A}_H) < 1\), let
\(\phi_{\mathcal{A}_H} = \mathbf{1}_{\mathcal{A}_H} - \pi(\mathcal{A}_H) \in L^2(\pi)\),
which is centered and has
\(\|\phi_{\mathcal{A}_H}\|_{L^2(\pi)}^2 = \pi(\mathcal{A}_H)(1-\pi(\mathcal{A}_H))\).
The local relaxation rate of \(\mathcal{A}_H\) is the best exponential
rate at which the semigroup contracts this centered indicator,
\[
\lambda_{\mathcal{A}_H} \;:=\; \sup\big\{\, \lambda \ge 0 : \|P_t \phi_{\mathcal{A}_H}\|_{L^2(\pi)} \le \|\phi_{\mathcal{A}_H}\|_{L^2(\pi)}\, e^{-\lambda t} \ \text{for all } t \ge 0 \,\big\}.
\]
Equivalently, by the spectral theorem for the self-adjoint operator
\(-\mathcal{L}\), with
\(\mu_{\phi}(d\lambda) := d\langle E_\lambda \phi_{\mathcal{A}_H}, \phi_{\mathcal{A}_H}\rangle_\pi\)
the scalar spectral measure of \(\phi_{\mathcal{A}_H}\),
\[
\lambda_{\mathcal{A}_H} \;=\; \inf \operatorname{supp}(\mu_\phi).
\]
Because \(\phi_{\mathcal{A}_H} \perp \mathbf{1}\) and the Poincaré
inequality of \cref{thm:dynamic-mass} places the spectrum of
\(-\mathcal{L}\) on the nonconstant subspace in \([m, \infty)\), we have
\(\operatorname{supp}(\mu_\phi) \subseteq [m,\infty)\), hence
\[
\lambda_{\mathcal{A}_H} \;\ge\; m.
\]
The rate is exactly \(m\) when \(\phi_{\mathcal{A}_H}\) overlaps
the slowest mode of \(-\mathcal{L}\), and strictly larger when that
overlap vanishes. This is the spectral content of ``isolation'': an
isolated set is one whose indicator is orthogonal to the slow modes.
\end{definition}

\subsection*{Conductance as the geometric diagnostic} The quantity that
reads off isolation from the geometry of \(\mathcal{A}_H\) is the
conductance under the Langevin diffusion, with diffusion coefficient
\(\sigma^2/2\),
\[
h_{\mathcal{A}_H} \;:=\; \frac{\frac{\sigma^2}{2}\int_{\partial \mathcal{A}_H} \pi\, dS}{\min\big(\pi(\mathcal{A}_H),\, \pi(\mathcal{A}_H^c)\big)}.
\]
Say \(\mathcal{A}_H\) is {flux-isolated} if \(h_{\mathcal{A}_H}\)
is bounded below independently of \(d\). A small conductance forces a
slow-mode overlap and hence \(\lambda_{\mathcal{A}_H}\) close to \(m\),
while for the families of Section 6.3 the gain is verified directly: by
parity for symmetric sets (\cref{prop:tax-a}) and through an
Arrhenius-small slow-mode overlap for the far-tail ball
(\cref{prop:tax-b}). We do not assert a universal one-line Cheeger lower bound
\(\lambda_{\mathcal{A}_H} \ge h_{\mathcal{A}_H}^2/2\) for this
spectral-measure quantity; the conductance is used as the geometric
diagnostic, and the lower bound is established per family.

\subsection{The flux-isolation theorem}\label{sec:the-flux-isolation-theorem}

\begin{theorem}[mass control at the local rate]\label{thm:flux-isolation}
Assume \(m I \preceq \nabla^2 J\) and \(\chi_0^2 < \infty\). Then for
every measurable \(\mathcal{A}_H\) with \(0 < \pi(\mathcal{A}_H) < 1\)
and every \(t \ge 0\),
\[
\big|\nu_t(\mathcal{A}_H) - \pi(\mathcal{A}_H)\big| \;\le\; \sqrt{\chi_0^2\,\pi(\mathcal{A}_H)\,(1-\pi(\mathcal{A}_H))}\; e^{-\lambda_{\mathcal{A}_H} t} \;\le\; \sqrt{\chi_0^2\,\pi(\mathcal{A}_H)}\; e^{-\lambda_{\mathcal{A}_H} t},
\]
equivalently
\(\nu_t(\mathcal{A}_H) \le \pi(\mathcal{A}_H)\big(1 + \sqrt{\chi_0^2/\pi(\mathcal{A}_H)}\, e^{-\lambda_{\mathcal{A}_H} t}\big)\).
If in addition \(M := \|\nu_0/\pi\|_\infty < \infty\), then combining
with \cref{prop:max-principle},
\[
\nu_t(\mathcal{A}_H) \;\le\; \min\!\Big(M\,\pi(\mathcal{A}_H),\;\; \pi(\mathcal{A}_H) + \sqrt{\chi_0^2\,\pi(\mathcal{A}_H)}\; e^{-\lambda_{\mathcal{A}_H} t}\Big).
\]
\end{theorem}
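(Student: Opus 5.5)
The plan is to run Step 6 of the proof of \cref{thm:dynamic-mass} with the roles swapped. The contraction is placed on the centered indicator instead of on the density ratio $u_t - 1$, so that $\lambda_{\mathcal{A}_H}$ enters directly through \cref{def:local-rate}.

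First, I would write
\[
\nu_t(\mathcal{A}_H) - \pi(\mathcal{A}_H) = \langle u_t - 1, \phi_{\mathcal{A}_H}\rangle_\pi = \langle P_t(u_0 - 1), \phi_{\mathcal{A}_H}\rangle_\pi = \langle u_0 - 1, P_t\phi_{\mathcal{A}_H}\rangle_\pi .
\]
The first equality is the centering identity from Step 6. The second uses $u_t - 1 = P_t(u_0-1)$ from Step 5. The third uses self-adjointness of $P_t$ in $L^2(\pi)$ from Step 2. Cauchy--Schwarz then bounds this by
\[
\|u_0 - 1\|_{L^2(\pi)}\,\|P_t\phi_{\mathcal{A}_H}\|_{L^2(\pi)} = \sqrt{\chi_0^2}\,\|P_t\phi_{\mathcal{A}_H}\|_{L^2(\pi)} .
\]

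Second, I would bound $\|P_t\phi_{\mathcal{A}_H}\|_{L^2(\pi)} \le \|\phi_{\mathcal{A}_H}\|_{L^2(\pi)}\,e^{-\lambda_{\mathcal{A}_H} t}$. This is the one point needing care, since the supremum in \cref{def:local-rate} need not obviously be attained. I would use the spectral characterization instead. With $\mu_\phi$ the spectral measure of $\phi_{\mathcal{A}_H}$ with respect to $-\mathcal{L}$,
\[
\|P_t\phi_{\mathcal{A}_H}\|^2 = \int e^{-2\lambda t}\,\mu_\phi(d\lambda) \le e^{-2\lambda_{\mathcal{A}_H} t}\,\mu_\phi([0,\infty)) = e^{-2\lambda_{\mathcal{A}_H} t}\,\|\phi_{\mathcal{A}_H}\|^2 ,
\]
because $\mu_\phi$ is supported in $[\lambda_{\mathcal{A}_H},\infty)$. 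Equivalently, the inequality holds for every $\lambda < \lambda_{\mathcal{A}_H}$ by definition, and letting $\lambda \uparrow \lambda_{\mathcal{A}_H}$ at fixed $t$ gives it at the supremum. I would also note that the two descriptions in \cref{def:local-rate} agree. One direction is the display above. For the other, if $\mu_\phi([\lambda_{\mathcal{A}_H},\lambda_{\mathcal{A}_H}+\epsilon)) > 0$ for every $\epsilon>0$, then $\int e^{-2\lambda t}\,d\mu_\phi$ decays no faster than $e^{-2(\lambda_{\mathcal{A}_H}+\epsilon)t}$, which shows the supremum cannot exceed the infimum of the support. Either route suffices for the proof.

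Third, inserting $\|\phi_{\mathcal{A}_H}\|^2 = \pi(\mathcal{A}_H)(1-\pi(\mathcal{A}_H))$ gives the first inequality of the theorem. Dropping the factor $1-\pi(\mathcal{A}_H) \le 1$ gives the second, and dividing by $\pi(\mathcal{A}_H)$ gives the equivalent multiplicative form. For the final display, I would take the minimum of this bound with the ceiling $M\pi(\mathcal{A}_H)$ of \cref{prop:max-principle}, which holds for all $t$ whenever $M<\infty$. In that case $\chi_0^2 \le M-1 < \infty$ automatically.

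I expect the main obstacle to be only the attainment issue in the second step, which the spectral-measure argument handles. The rest is routine.
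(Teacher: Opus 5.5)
Your proposal is correct and follows the paper's proof essentially step for step: center the indicator, move $P_t$ onto $\phi_{\mathcal{A}_H}$ by self-adjointness, bound $\|P_t\phi_{\mathcal{A}_H}\|_{L^2(\pi)}$ using the support of the spectral measure $\mu_\phi$, apply Cauchy--Schwarz with $\|u_0-1\|_{L^2(\pi)}=\sqrt{\chi_0^2}$, and adjoin the ceiling of \cref{prop:max-principle}. Your extra remark, that the supremum in \cref{def:local-rate} is attained by letting $\lambda \uparrow \lambda_{\mathcal{A}_H}$ at fixed $t$, is a correct addition that the paper leaves implicit by working directly with the spectral characterization.
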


This is \cref{thm:dynamic-mass} with the global rate \(m\)
replaced by the local rate \(\lambda_{\mathcal{A}_H} \ge m\). For a
flux-isolated set the burn-in time shrinks from
\(\frac{1}{2m}\log(\chi_0^2/\pi(\mathcal{A}_H))\) to
\(\frac{1}{2\lambda_{\mathcal{A}_H}}\log(\chi_0^2/\pi(\mathcal{A}_H))\),
a saving by the factor \(m/\lambda_{\mathcal{A}_H}\), and the \(\min\)
above caps the trajectory mass at \(M\pi(\mathcal{A}_H)\) for all \(t\),
so no swelling above that ceiling ever occurs.

\begin{proof}
Write
\(\phi := \phi_{\mathcal{A}_H} = \mathbf{1}_{\mathcal{A}_H} - \pi(\mathcal{A}_H)\),
so \(\|\phi\|_{L^2(\pi)}^2 = \pi(\mathcal{A}_H)(1-\pi(\mathcal{A}_H))\).
Since \(u_t - 1 = P_t(u_0 - 1)\) and \(\int(u_t - 1)\,d\pi = 0\),
centering the indicator and using self-adjointness of \(P_t\) in
\(L^2(\pi)\),
\[
\nu_t(\mathcal{A}_H) - \pi(\mathcal{A}_H) \;=\; \langle u_t - 1,\, \phi\rangle_\pi \;=\; \langle P_t(u_0 - 1),\, \phi\rangle_\pi \;=\; \langle u_0 - 1,\, P_t \phi\rangle_\pi.
\]
By the spectral representation in \cref{def:local-rate}, with
\(\operatorname{supp}(\mu_\phi) \subseteq [\lambda_{\mathcal{A}_H}, \infty)\),
\[
\|P_t \phi\|_{L^2(\pi)}^2 \;=\; \int e^{-2\lambda t}\, d\mu_\phi(\lambda) \;\le\; e^{-2\lambda_{\mathcal{A}_H} t}\int d\mu_\phi(\lambda) \;=\; e^{-2\lambda_{\mathcal{A}_H} t}\,\|\phi\|_{L^2(\pi)}^2.
\]
Cauchy-Schwarz, \(\|u_0 - 1\|_{L^2(\pi)} = \sqrt{\chi_0^2}\), and
\(\|\phi\|_{L^2(\pi)}^2 = \pi(\mathcal{A}_H)(1-\pi(\mathcal{A}_H)) \le \pi(\mathcal{A}_H)\)
give
\[
\big|\nu_t(\mathcal{A}_H) - \pi(\mathcal{A}_H)\big| \;\le\; \|u_0 - 1\|_{L^2(\pi)}\,\|P_t \phi\|_{L^2(\pi)} \;\le\; \sqrt{\chi_0^2\,\pi(\mathcal{A}_H)(1-\pi(\mathcal{A}_H))}\; e^{-\lambda_{\mathcal{A}_H} t}.
\]
Dividing the upper inequality by \(\pi(\mathcal{A}_H)\) gives the
multiplicative form. The \(\min\) statement adjoins the
maximum-principle ceiling of \cref{prop:max-principle}.
\end{proof}

\subsection*{Remark (why not the linear prefactor)} The qualitatively
stronger statement
\(|\nu_t(\mathcal{A}_H) - \pi(\mathcal{A}_H)| \le (M-1)\pi(\mathcal{A}_H)\,e^{-\lambda_{\mathcal{A}_H} t}\),
with prefactor linear in \(\pi(\mathcal{A}_H)\) and hence free
of any burn-in, would follow from an \(L^1(\pi)\) contraction
\(\|P_t \phi\|_{L^1(\pi)} \lesssim \|\phi\|_{L^1(\pi)} e^{-\lambda_{\mathcal{A}_H} t}\)
paired against \(\|u_0 - 1\|_\infty \le M - 1\). The semigroup is an
\(L^1(\pi)\) contraction but not at a positive rate in general;
obtaining the exponential \(L^1\) rate requires hypercontractivity
(available here from the Bakry-Émery log-Sobolev inequality) to lift
\(L^1\) to \(L^2\) after a fixed lag, which yields the global
log-Sobolev rate rather than the local rate \(\lambda_{\mathcal{A}_H}\).
We therefore state the rigorous \(L^2\) bound and obtain the
uniform-in-time control by intersecting it with the maximum-principle
ceiling, as in the \(\min\) above.

\subsection{High-dimensional taxonomy}\label{sec:high-dimensional-taxonomy}

The payoff of \(\lambda_{\mathcal{A}_H}\) is in the high-dimensional
geometry of a strongly log-concave \(\pi\), where the mass concentrates
on a thin shell. For the prototype \(J(Q) = \tfrac{1}{2}\|Q\|^2\) on
\(\mathbb{R}^d\), \(\pi\) concentrates on \(\|Q\| \approx \sqrt{d}\)
with \(O(1)\) width. Three families of small-\(\pi\)-mass failure
regions behave very differently.

\begin{figure}[t]
  \centering
  \footnotesize
  \includegraphics[width=\linewidth]{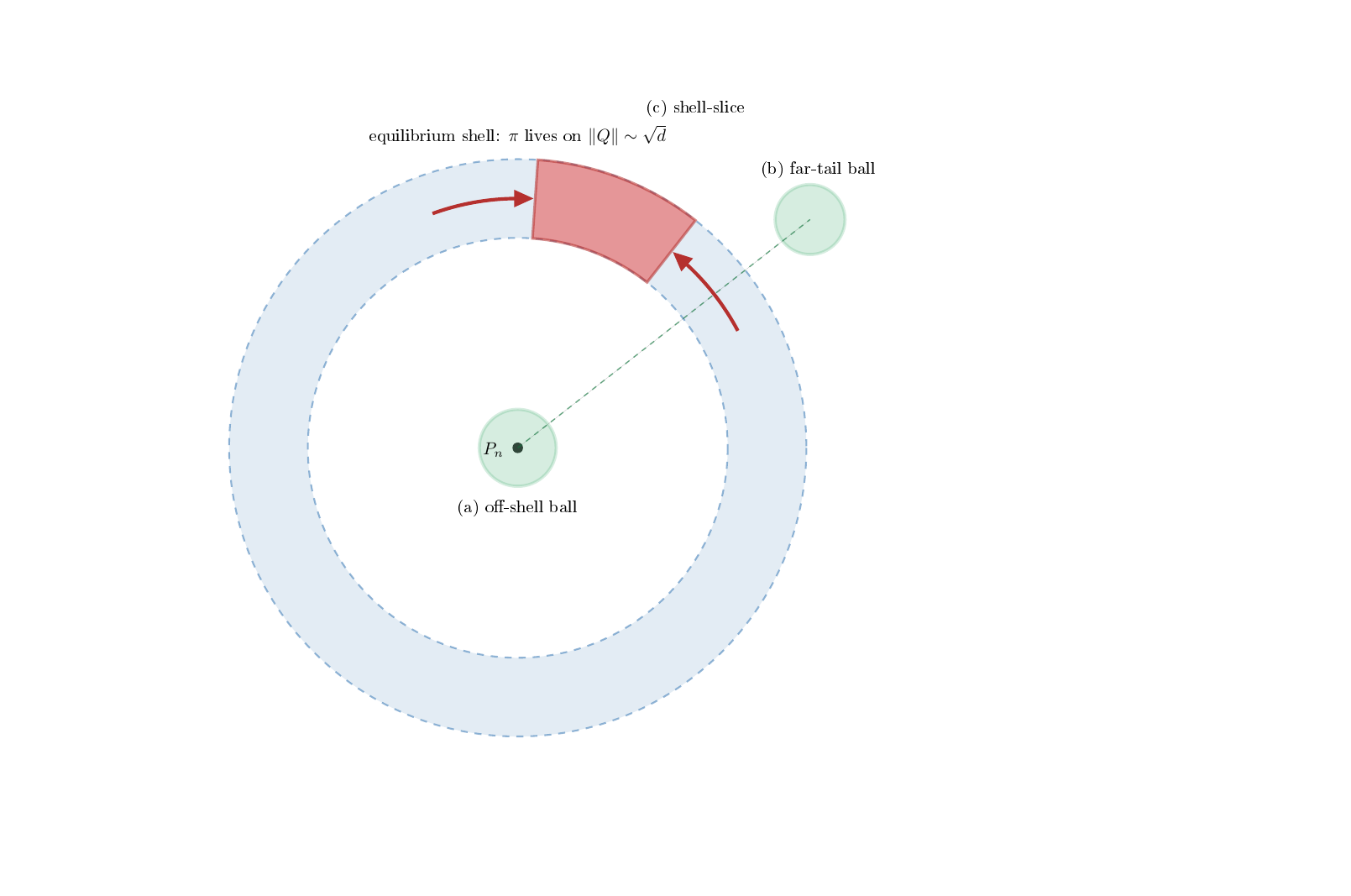}
  \caption{High-dimensional taxonomy of failure regions for the
    quadratic loss $J(Q) = \tfrac{1}{2}\|Q\|^2$, whose equilibrium
    $\pi$ concentrates on the shell $\|Q\| \sim \sqrt{d}$.
    (a) Off-shell ball at the mode: entropic moat, flux-isolated,
    $\lambda_{\mathcal{A}_H} = 2m$ (\cref{prop:tax-a}).
    (b) Far-tail ball: energetic moat (Arrhenius barrier), effective
    rate $2m$ up to exponentially long times (\cref{prop:tax-b}).
    (c) Shell-slice: mass flows freely along the shell,
    $\lambda_{\mathcal{A}_H} = m$ exactly (\cref{prop:tax-c}): no
    spectral gain, only the ceiling $M\pi(\mathcal{A}_H)$ survives.
    Shape-aware control applies to (a) via \cref{thm:flux-isolation}
    and to (b) via \cref{prop:tax-b}; for (c),
    \cref{thm:flux-isolation} reduces to \cref{thm:dynamic-mass}.}
  \label{fig:taxonomy}
\end{figure}

\begin{itemize}
\tightlist
\item
  \textbf{Off-shell ball at the mode}, \(\mathcal{A}_H = \{\|Q\| < r\}\)
  with \(r \ll \sqrt{d}\). The mode has lowest loss, but vanishing
  surface area faces the bulk: an entropic moat. Here
  \(\pi(\mathcal{A}_H) \sim e^{-d/2}\) while the boundary flux is
  comparably small, so \(h_{\mathcal{A}_H}\) is bounded below, and by
  parity \(\lambda_{\mathcal{A}_H} = 2m\) exactly (\cref{prop:tax-a}).
  \textbf{Flux-isolated; \cref{thm:flux-isolation} applies at the
  doubled rate.}
\item
  \textbf{Far-tail ball}, \(\mathcal{A}_H = \{\|Q - Q_0\| < r\}\) with
  \(\|Q_0\| \gg \sqrt{d}\). A genuine energy barrier of height
  \(\sim \|Q_0\|^2/2\): hitting times grow exponentially
  (Eyring-Kramers). The strict rate is \(\lambda_{\mathcal{A}_H} = m\),
  since any off-center set overlaps the slow linear modes, but the
  overlap is Arrhenius-small, so relaxation runs at the doubled rate
  \(2m\) throughout a window growing linearly in the barrier
  (\cref{prop:tax-b}). \textbf{Flux-isolated in the effective sense;
  the two-rate bound applies.}
\item
  \textbf{Thin shell-slice}, an angular wedge of the equilibrium shell
  \(\|Q\| \approx \sqrt{d}\). Equilibrium mass is small, but mass
  transports freely along the shell with no barrier: the slice overlaps
  the slow linear modes and \(\lambda_{\mathcal{A}_H} = m\) exactly
  (\cref{prop:tax-c}). \textbf{Not flux-isolated;
  \cref{thm:flux-isolation} reduces to \cref{thm:dynamic-mass}, and
  only the ceiling \(M\pi(\mathcal{A}_H)\) survives the transient.}
\end{itemize}

Small \(\pi(\mathcal{A}_H)\) is not enough to keep
\(\nu_t(\mathcal{A}_H)\) small for all \(t\); one also needs
\(\lambda_{\mathcal{A}_H}\) bounded below. The shell-slice is the
high-dimensional version of the swelling example, quantified in
Section 7.

\subsection{The taxonomy, proved for the quadratic prototype}\label{sec:taxonomy-proofs}

For the prototype the three cases can be settled exactly, because the
spectrum is explicit. Take \(J(Q) = \tfrac{m}{2}\|Q\|^2\), so
\(\pi = N(0, s^2 I)\) with \(s^2 = \sigma^2/(2m)\), and the normalized
eigenfunctions of \(-\mathcal{L}\) in \(L^2(\pi)\) are the Hermite
products
\(e_k(Q) = \prod_{i=1}^d \mathrm{He}_{k_i}(Q_i/s)/\sqrt{k_i!}\) over
multi-indices \(k \in \mathbb{N}_0^d\), with eigenvalues \(m|k|\),
\(|k| = k_1 + \cdots + k_d\) \cite[Ch.~4]{Pav14}. The spectral measure
\(\mu_\phi\) of the centered indicator is therefore atomic on
\(\{m, 2m, 3m, \ldots\}\), with mass
\(\sum_{|k| = j} \langle \phi_{\mathcal{A}_H}, e_k\rangle_\pi^2\) at
level \(mj\), and
\[
\lambda_{\mathcal{A}_H} \;=\; m \cdot \min\Big\{\, j \ge 1 \;:\; \textstyle\sum_{|k| = j} \langle \phi_{\mathcal{A}_H}, e_k\rangle_\pi^2 > 0 \,\Big\}.
\]
Two facts drive all three cases: \(e_k\) has parity \((-1)^{|k|}\), and
the level-one eigenfunctions are the linear coordinates \(Q_i/s\).

\begin{proposition}[case (a): symmetric sets and the off-shell ball]\label{prop:tax-a}
If \(\mathcal{A}_H = -\mathcal{A}_H\) with
\(0 < \pi(\mathcal{A}_H) < 1\), then
\(\lambda_{\mathcal{A}_H} \ge 2m\). For the centered ball
\(\mathcal{A}_H = \{\|Q\| < r\}\) with \(0 < r < \sqrt{d}\,s\), the rate
is exact: \(\lambda_{\mathcal{A}_H} = 2m\).
\end{proposition}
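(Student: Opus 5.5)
The plan is to use the explicit Hermite decomposition recorded just before the statement. In it, \(\lambda_{\mathcal{A}_H}/m\) is the least level \(j\ge 1\) at which \(\phi_{\mathcal{A}_H}=\mathbf{1}_{\mathcal{A}_H}-\pi(\mathcal{A}_H)\) has a nonzero Hermite coefficient. So the task reduces to two things: showing every level-one coefficient vanishes for symmetric sets, and exhibiting a nonzero level-two coefficient for the centered ball.

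\emph{Lower bound for symmetric sets.} The level-one eigenfunctions are \(e_{k}(Q)=Q_i/s\) for \(|k|=1\), and these are odd. Since \(\pi=N(0,s^2I)\) is invariant under \(Q\mapsto -Q\) and \(\mathcal{A}_H=-\mathcal{A}_H\), the indicator \(\mathbf{1}_{\mathcal{A}_H}\) is even, and so is \(\phi_{\mathcal{A}_H}\). Hence \(\langle \phi_{\mathcal{A}_H}, Q_i/s\rangle_\pi=0\) for each \(i\), because the integrand is odd against a symmetric measure. The spectral measure \(\mu_\phi\) then has no atom at \(m\). It also has no atom at \(0\), since \(\phi\perp\mathbf{1}\). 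Its support therefore lies in \(\{2m,3m,\dots\}\), giving \(\lambda_{\mathcal{A}_H}\ge 2m\). More generally, parity \((-1)^{|k|}\) kills all odd levels, though only level one matters here.

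\emph{Exactness for the centered ball.} I need one \(k\) with \(|k|=2\) and \(\langle\phi,e_k\rangle_\pi\ne0\). The natural choice is the radial combination \(\sum_i e_{2\epsilon_i}=\sum_i \mathrm{He}_2(Q_i/s)/\sqrt2=(\|Q\|^2/s^2-d)/\sqrt2\). By rotation invariance all the diagonal coefficients \(\langle \phi, e_{2\epsilon_i}\rangle_\pi\) are equal, so it suffices to show that
\[
\langle \mathbf{1}_{\{\|Q\|<r\}},\, \|Q\|^2/s^2-d\rangle_\pi \ne 0 .
\]
The centering constant drops out because \(\mathbb{E}_\pi[\|Q\|^2/s^2-d]=0\). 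Write \(R=\|Q\|^2/s^2\sim\chi^2_d\). The quantity is then \(\mathbb{E}[(R-d)\mathbf{1}_{\{R<\rho\}}]\) with \(\rho=r^2/s^2\in(0,d)\). On the event \(\{R<\rho\}\), which has positive probability, the integrand \(R-d\) is strictly negative, since \(R<\rho<d\). The expectation is therefore strictly negative, hence nonzero. So level two carries positive spectral mass, and \(\lambda_{\mathcal{A}_H}=2m\).

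\emph{Main obstacle.} The computation itself is light. The care goes into justifying the identification of \(\lambda_{\mathcal{A}_H}\) with the minimal charged level, which the paper states before the proposition. That identification rests on completeness of the Hermite basis in \(L^2(\pi)\), so that \(\mu_\phi\) is purely atomic with the stated masses. I would cite this from \cite[Ch.~4]{Pav14} and not reprove it. The one substantive point to check is that the hypothesis \(r<\sqrt d\,s\) is exactly what makes \(R-d\) keep a single sign on the ball. For \(r\ge \sqrt d\,s\) the sign argument fails and a different level-two test function, or a computation, would be needed.
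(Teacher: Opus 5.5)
Your proposal is correct and follows the paper's proof. Parity of \(e_k\) under \(Q\mapsto -Q\) removes the odd levels (the paper kills all of them, while you only need level one), and the radial level-two eigenfunction \(\|Q\|^2-ds^2\), which is at most \(r^2-ds^2<0\) on the ball, gives a strictly negative, hence nonzero, level-two coefficient. On your closing concern: the same test function also works for \(r\ge\sqrt{d}\,s\), since centering gives \(\mathbb{E}[(R-d)\mathbf{1}_{\{R<\rho\}}]=-\mathbb{E}[(R-d)\mathbf{1}_{\{R\ge\rho\}}]<0\) whenever \(\rho\ge d\).
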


\begin{proof}
For \(k \neq 0\) the eigenfunctions are centered, so
\(\langle \phi_{\mathcal{A}_H}, e_k\rangle_\pi = \int_{\mathcal{A}_H} e_k\, d\pi\).
The substitution \(Q \mapsto -Q\) preserves \(\pi\) and
\(\mathcal{A}_H\) and multiplies \(e_k\) by \((-1)^{|k|}\), so every
odd-level coefficient vanishes and
\(\operatorname{supp}(\mu_\phi) \subseteq \{2m, 4m, \ldots\}\), giving
\(\lambda_{\mathcal{A}_H} \ge 2m\). For the centered ball, the radial
level-two eigenfunction is proportional to \(\|Q\|^2 - ds^2\), and on
\(\mathcal{A}_H\) the integrand is bounded above by \(r^2 - ds^2 < 0\),
so
\(\int_{\mathcal{A}_H} (\|Q\|^2 - ds^2)\, d\pi \le (r^2 - ds^2)\,\pi(\mathcal{A}_H) < 0\):
level two carries positive mass and \(\lambda_{\mathcal{A}_H} = 2m\).
\end{proof}

\begin{proposition}[case (b): far-tail ball, two-rate bound]\label{prop:tax-b}
Let \(\mathcal{A}_H = \{\|Q - Q_0\| \le r\}\) with
\(\rho := \|Q_0\| - r > 0\), and let
\[
\theta^2 \;:=\; \frac{\sum_{i=1}^d \langle \phi_{\mathcal{A}_H},\, Q_i/s\rangle_\pi^2}{\pi(\mathcal{A}_H)\,\big(1 - \pi(\mathcal{A}_H)\big)} \;\in\; [0, 1]
\]
be the fraction of the indicator's variance carried by the slow
(level-one) modes. Then for all \(t \ge 0\),
\[
\|P_t\, \phi_{\mathcal{A}_H}\|_{L^2(\pi)}^2 \;\le\; \pi(\mathcal{A}_H)\,\big(\theta^2 e^{-2mt} + e^{-4mt}\big),
\]
and hence, for any \(\nu_0\) with \(\chi_0^2 < \infty\),
\[
\big|\nu_t(\mathcal{A}_H) - \pi(\mathcal{A}_H)\big| \;\le\; \sqrt{\chi_0^2\,\pi(\mathcal{A}_H)}\;\big(\theta\, e^{-mt} + e^{-2mt}\big).
\]
The slow-mode fraction is Arrhenius-small:
\[
\theta \;\le\; \frac{\|Q_0\| + r}{s}\,\sqrt{\frac{\pi(\mathcal{A}_H)}{1 - \pi(\mathcal{A}_H)}}, \qquad \pi(\mathcal{A}_H) \;\le\; e^{-\rho^2/(2s^2)}.
\]
In particular the strict rate is \(\lambda_{\mathcal{A}_H} = m\) (an
off-center set overlaps the linear modes), but
\(\|P_t\, \phi_{\mathcal{A}_H}\|_{L^2(\pi)} \le \sqrt{2\,\pi(\mathcal{A}_H)}\; e^{-2mt}\)
for all \(t \le \frac{1}{m}\log(1/\theta)\): relaxation runs at the
doubled rate throughout a window whose length grows linearly in the
Arrhenius exponent \(\rho^2/(4s^2) = m\rho^2/(2\sigma^2)\).
\end{proposition}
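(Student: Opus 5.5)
The plan is to work entirely in the Hermite eigenbasis of \cref{sec:taxonomy-proofs}. There the spectral measure of \(\phi := \phi_{\mathcal{A}_H}\) is atomic on \(\{mj\}_{j\ge 1}\) with weights \(\mu_j := \sum_{|k|=j}\langle \phi, e_k\rangle_\pi^2\). The weights satisfy \(\sum_j \mu_j = \|\phi\|_{L^2(\pi)}^2 = \pi(\mathcal{A}_H)(1-\pi(\mathcal{A}_H))\), and by definition of \(\theta\) the level-one weight is \(\mu_1 = \theta^2\,\pi(\mathcal{A}_H)(1-\pi(\mathcal{A}_H))\).

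\emph{Semigroup bound.} Splitting the spectral sum into level one and levels \(j\ge 2\) gives
\[
\|P_t\phi\|_{L^2(\pi)}^2 = \sum_{j\ge1} e^{-2mjt}\mu_j \;\le\; e^{-2mt}\mu_1 + e^{-4mt}\sum_{j\ge2}\mu_j .
\]
Bounding \(\mu_1 \le \theta^2\pi(\mathcal{A}_H)\) and \(\sum_{j\ge 2}\mu_j \le \pi(\mathcal{A}_H)\) yields the displayed estimate.

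\emph{Set-mass bound.} I then repeat the duality step from the proof of \cref{thm:flux-isolation}:
\[
\nu_t(\mathcal{A}_H) - \pi(\mathcal{A}_H) = \langle u_0 - 1,\, P_t\phi\rangle_\pi .
\]
Cauchy--Schwarz with \(\|u_0-1\|_{L^2(\pi)} = \sqrt{\chi_0^2}\), together with \(\sqrt{a+b}\le\sqrt a+\sqrt b\), gives the factor \(\theta e^{-mt} + e^{-2mt}\).

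\emph{Arrhenius bound on \(\theta\).} Since \(Q_i\) is centered under \(\pi\), we have \(\langle\phi, Q_i/s\rangle_\pi = \int_{\mathcal{A}_H} (Q_i/s)\,d\pi\). Hence the numerator of \(\theta^2\) equals \(\big\|\int_{\mathcal{A}_H} Q\,d\pi\big\|^2/s^2\). This is at most \(\big(\sup_{\mathcal{A}_H}\|Q\|/s\big)^2\pi(\mathcal{A}_H)^2 \le \big((\|Q_0\|+r)/s\big)^2\pi(\mathcal{A}_H)^2\). Dividing by \(\pi(\mathcal{A}_H)(1-\pi(\mathcal{A}_H))\) gives the stated bound on \(\theta\).

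\emph{Tail bound on \(\pi(\mathcal{A}_H)\).} For the mass I will not use the radial tail, which is dimension-polluted because \(\pi\) sits at radius \(\sqrt d\,s\). Instead, with \(u := Q_0/\|Q_0\|\), every \(Q\) in the ball satisfies \(\langle Q,u\rangle \ge \|Q_0\| - r = \rho\). So \(\mathcal{A}_H\) lies in a half-space. Since \(\langle Q,u\rangle \sim N(0,s^2)\) under \(\pi\), the Chernoff bound gives \(\pi(\mathcal{A}_H) \le e^{-\rho^2/(2s^2)}\). I expect this choice of the one-dimensional projection to be the main point to get right; everything else is bookkeeping.

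\emph{Strict rate equals \(m\).} The level-one coefficient vector \(\int_{\mathcal{A}_H} Q\,d\pi\) has component along \(u\) at least \(\rho\,\pi(\mathcal{A}_H) > 0\). Hence \(\mu_1 > 0\), and the formula for \(\lambda_{\mathcal{A}_H}\) in \cref{sec:taxonomy-proofs} gives \(\lambda_{\mathcal{A}_H} = m\).

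\emph{Doubled-rate window.} For \(t \le \frac1m\log(1/\theta)\) we have \(\theta \le e^{-mt}\), so \(\theta^2 e^{-2mt} \le e^{-4mt}\) and the semigroup bound becomes \(\|P_t\phi\|_{L^2(\pi)}^2 \le 2\pi(\mathcal{A}_H)e^{-4mt}\). Combining the two bounds above,
\[
\log(1/\theta) \;\ge\; \frac{\rho^2}{4s^2} - \log\frac{\|Q_0\|+r}{s} + O(1),
\]
so the window length \(\frac1m\log(1/\theta)\) grows linearly in the exponent \(\rho^2/(4s^2) = m\rho^2/(2\sigma^2)\), up to a logarithmic correction from the prefactor. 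That correction is the only place where care is needed in stating the claim.
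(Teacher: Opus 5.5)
Your proposal is correct and follows the paper's proof essentially step for step. The shared steps are: splitting the Hermite spectral measure into level one and levels $j\ge 2$; the duality pairing with Cauchy--Schwarz and $\sqrt{a+b}\le\sqrt{a}+\sqrt{b}$; the bound $\|\int_{\mathcal{A}_H}Q\,d\pi\|\le(\|Q_0\|+r)\,\pi(\mathcal{A}_H)$; the half-space projection onto $Q_0/\|Q_0\|$ for the Gaussian tail; and the lower bound $\rho\,\pi(\mathcal{A}_H)$ on the level-one coefficient, which gives the strict rate. Your explicit tracking of the $-\log\big((\|Q_0\|+r)/s\big)$ correction in the window length is a slightly more careful reading of ``grows linearly in the Arrhenius exponent'' than the paper gives, but it does not change the argument.
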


\begin{proof}
Split \(\mu_\phi\) into its mass at level one, which is
\(\theta^2\,\|\phi_{\mathcal{A}_H}\|_{L^2(\pi)}^2\), and the rest,
supported in \([2m, \infty)\) with total mass at most
\(\|\phi_{\mathcal{A}_H}\|_{L^2(\pi)}^2 \le \pi(\mathcal{A}_H)\). Then
\[
\|P_t\, \phi_{\mathcal{A}_H}\|_{L^2(\pi)}^2 \;=\; \int e^{-2\lambda t}\, d\mu_\phi(\lambda) \;\le\; \pi(\mathcal{A}_H)\,\big(\theta^2 e^{-2mt} + e^{-4mt}\big),
\]
and the trajectory bound follows by pairing against \(u_0 - 1\) as in
the proof of \cref{thm:flux-isolation}, using
\(\sqrt{a + b} \le \sqrt{a} + \sqrt{b}\). For the overlap, rotate
coordinates so that \(Q_0 = \|Q_0\|\, e_1\). Since
\(\langle \phi_{\mathcal{A}_H}, Q_i/s\rangle_\pi = \frac{1}{s}\int_{\mathcal{A}_H} Q_i\, d\pi\),
\[
\sum_{i=1}^d \langle \phi_{\mathcal{A}_H}, Q_i/s\rangle_\pi^2 \;=\; \frac{1}{s^2}\,\Big\|\int_{\mathcal{A}_H} Q\, d\pi\Big\|^2 \;\le\; \frac{(\|Q_0\| + r)^2}{s^2}\,\pi(\mathcal{A}_H)^2,
\]
because \(\|Q\| \le \|Q_0\| + r\) on \(\mathcal{A}_H\); dividing by
\(\pi(\mathcal{A}_H)(1 - \pi(\mathcal{A}_H))\) gives the bound on
\(\theta^2\). For the mass, \(\mathcal{A}_H\) lies in the half-space
\(\{Q_1 \ge \rho\}\) and \(Q_1 \sim N(0, s^2)\) under \(\pi\), so the
Gaussian tail bound gives
\(\pi(\mathcal{A}_H) \le e^{-\rho^2/(2s^2)}\). Finally
\(\int_{\mathcal{A}_H} Q_1\, d\pi \ge \rho\,\pi(\mathcal{A}_H) > 0\), so
level one carries positive mass and \(\lambda_{\mathcal{A}_H} = m\).
For the last claim, \(\theta^2 e^{-2mt} \le e^{-4mt}\) if and only if
\(t \le \frac{1}{m}\log(1/\theta)\), and on that window the two-term
bound is at most \(2\,\pi(\mathcal{A}_H)\, e^{-4mt}\).
\end{proof}

The two-rate estimate itself uses only
\(\sup_{\mathcal{A}_H}\|Q\| \le \|Q_0\| + r\) and so holds for any
bounded set of that radius; what is specific to the far-tail ball is
that its position buys the Arrhenius-small \(\theta\) through the mass
bound.

\begin{proposition}[case (c): shell-slice]\label{prop:tax-c}
Let
\(\mathcal{A}_H = \{Q : R_1 \le \|Q\| \le R_2,\ \langle Q, e_1\rangle \ge \|Q\|\cos\alpha\}\)
with \(0 < R_1 \le R_2 \le \infty\) and \(\alpha \in (0, \pi/2)\). Then
\[
\langle \phi_{\mathcal{A}_H},\, Q_1/s\rangle_\pi \;\ge\; \frac{R_1\cos\alpha}{s}\;\pi(\mathcal{A}_H) \;>\; 0,
\]
so level one carries positive mass and
\(\lambda_{\mathcal{A}_H} = m\) exactly: for the shell-slice,
\cref{thm:flux-isolation} reduces to \cref{thm:dynamic-mass} and the
shape-aware route yields no improvement. The transient overshoot this
permits is realized, quantitatively, in Section 7.
\end{proposition}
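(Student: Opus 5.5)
The proof follows the template of \cref{prop:tax-b}, with the energy bound replaced by a geometric one.

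\emph{Step 1: reduce to a coordinate integral.} Since \(e_{k}=Q_1/s\) is a nonconstant eigenfunction, it is centered under \(\pi\), so the constant part of \(\phi_{\mathcal{A}_H}\) drops out:
\[
\langle \phi_{\mathcal{A}_H}, Q_1/s\rangle_\pi \;=\; \frac{1}{s}\int_{\mathcal{A}_H} Q_1\, d\pi .
\]

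\emph{Step 2: bound the integrand pointwise on the slice.} On \(\mathcal{A}_H\), the cone condition gives \(Q_1 = \langle Q, e_1\rangle \ge \|Q\|\cos\alpha\). The radial condition gives \(\|Q\| \ge R_1\). Since \(\cos\alpha > 0\) for \(\alpha\in(0,\pi/2)\), these combine to \(Q_1 \ge R_1\cos\alpha > 0\) on all of \(\mathcal{A}_H\). Integrating against \(\pi\) then yields
\[
\int_{\mathcal{A}_H} Q_1\,d\pi \;\ge\; R_1\cos\alpha\,\pi(\mathcal{A}_H),
\]
which is the displayed inequality after dividing by \(s\). The case \(R_2=\infty\) needs no separate treatment: \(Q_1\) is \(\pi\)-integrable, so the integral is finite.

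\emph{Step 3: positivity of the mass.} Strictness requires \(\pi(\mathcal{A}_H)>0\). The set \(\mathcal{A}_H\) contains an open set, namely points near \(\tfrac{R_1+R_2}{2}e_1\) (or near \(2R_1 e_1\) when \(R_2=\infty\)), provided \(R_1<R_2\). The Gaussian \(\pi\) has a positive density, so \(\pi(\mathcal{A}_H)>0\). Also \(\pi(\mathcal{A}_H)<1\), because the complement contains the open half-space \(\{Q_1<0\}\). Hence \(\phi_{\mathcal{A}_H}\) is well defined and \cref{def:local-rate} applies. If \(R_1=R_2\) the set is \(\pi\)-null, and I would note that nondegeneracy \(R_1<R_2\) is implicitly needed for the hypothesis \(0<\pi(\mathcal{A}_H)<1\).

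\emph{Step 4: read off the rate.} By the Hermite formula for \(\lambda_{\mathcal{A}_H}\), the mass of \(\mu_\phi\) at level one is \(\sum_i \langle\phi_{\mathcal{A}_H},Q_i/s\rangle_\pi^2\). This is at least the \(i=1\) term, which Step 2 shows is strictly positive. So the minimum index is \(j=1\) and \(\lambda_{\mathcal{A}_H}=m\). With \(\lambda_{\mathcal{A}_H}=m\), \cref{thm:flux-isolation} coincides verbatim with \cref{thm:dynamic-mass}.

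The only delicate point is the degenerate-geometry check in Step 3. Everything else is a one-line sign argument, and it is simpler than case (b) because no upper bound on \(\theta\) is needed.
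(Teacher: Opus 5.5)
Your proposal is correct and follows the paper's proof essentially line for line: you center the indicator against the mean-zero eigenfunction $Q_1/s$, use $Q_1 \ge \|Q\|\cos\alpha \ge R_1\cos\alpha > 0$ pointwise on the slice, and conclude that level one carries positive spectral mass, which forces $\lambda_{\mathcal{A}_H} = m$. Your Step 3 adds a check the paper leaves implicit, namely $0 < \pi(\mathcal{A}_H) < 1$, and you are right that the strict inequality needs $R_1 < R_2$, since for $R_1 = R_2$ the slice lies in a sphere and is $\pi$-null.
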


\begin{proof}
On \(\mathcal{A}_H\),
\(Q_1 = \langle Q, e_1\rangle \ge \|Q\|\cos\alpha \ge R_1\cos\alpha > 0\),
so
\(\langle \phi_{\mathcal{A}_H}, Q_1/s\rangle_\pi = \frac{1}{s}\int_{\mathcal{A}_H} Q_1\, d\pi \ge \frac{R_1\cos\alpha}{s}\,\pi(\mathcal{A}_H) > 0\).
Level one carries positive mass, so \(m \in \operatorname{supp}(\mu_\phi)\)
and, with \(\lambda_{\mathcal{A}_H} \ge m\) from \cref{def:local-rate},
\(\lambda_{\mathcal{A}_H} = m\).
\end{proof}

\subsection*{Remark (what the propositions change)} The propositions
sharpen the informal labels. The off-shell ball is isolated in the
strict spectral sense: \(\lambda_{\mathcal{A}_H} = 2m\). The far-tail
ball is not: any off-center set overlaps the slow linear modes, so its
strict rate is exactly \(m\), and \cref{def:local-rate} alone records
no gain. The gain is real but takes the two-rate form of
\cref{prop:tax-b}: the slow mode enters with an Arrhenius-small
prefactor, so the doubled rate governs until times growing linearly in
the barrier. The shell-slice also has strict rate \(m\), with a
slow-mode coefficient bounded below at the scale of its mass
(\cref{prop:tax-c}), and no spectral mechanism improves on
\cref{thm:dynamic-mass} for it: its uniform-in-time control comes only
from the ceiling of \cref{prop:max-principle} and the angular bounds of
Section 7. The upshot for \cref{def:local-rate} is that the strict
infimum \(\lambda_{\mathcal{A}_H}\) is brittle, collapsing to \(m\)
under a vanishingly small slow-mode overlap, and the two-rate bound of
\cref{prop:tax-b} is its robust refinement.

\section{Sharpness and worked example: Ornstein-Uhlenbeck and the shell}\label{sec:sharpness-and-worked-example-ornstein-uhlenbeck-and-the-shell}

This section makes the swelling phenomenon quantitative for the exactly
solvable Ornstein-Uhlenbeck (OU) case. It shows that the burn-in window
of Section 5 and the flux-isolation hypothesis of Section 6 are both
necessary, and it exhibits the precise geometry, an angular slice of the
equilibrium shell, where the transient overshoot is exponential in
\(d\).

Take the quadratic loss \(J(Q) = \tfrac{1}{2}\sum_i \lambda_i Q_i^2\)
with \(0 < \lambda_1 \le \cdots \le \lambda_d\), so the dynamics
decouples into coordinatewise OU processes and \(\pi = N(0, \Sigma)\)
with \(\Sigma = \operatorname{diag}(\sigma^2/(2\lambda_i))\). Start from
an isotropic Gaussian \(\nu_0 = N(0, \sigma_0^2 I)\). Two quantities
organize the analysis: the \textbf{transient overshoot factor}
\[
C_{\mathrm{bad}} \;:=\; \frac{\nu_{t_\star}(\mathcal{A}_H)}{\nu_0(\mathcal{A}_H)}
\]
at the worst time \(t_\star\), and the absolute mass
\(\nu_t(\mathcal{A}_H)\) itself.

\subsection{Isotropic case}\label{sec:isotropic-case}

Let \(J(Q) = \tfrac{a}{2}\|Q\|^2\) and let \(\mathcal{A}_H\) be confined
to a level-set shell at radius \(B^\star\). With the single
dimensionless ratio
\[
u \;:=\; \frac{(B^\star)^2}{d\,\sigma_0^2} \qquad \text{(squared shell radius over squared initial typical radius),}
\]
a direct Gaussian computation gives
\[
C_{\mathrm{bad}} \;=\; \exp\!\left[\frac{d}{2}\,\phi(u)\right], \qquad \phi(u) := u - 1 - \log u \;\ge\; 0,
\]
with equality \(\phi(u) = 0\) only at \(u = 1\). So the overshoot is
exponential in \(d\) at rate \(\tfrac12\phi(u)\) whenever the shell
radius is mismatched to the initial scale. This is the high-dimensional
swelling: a set of tiny mass can gain an \(e^{\Theta(d)}\) factor in
transit. The saving grace in the isotropic case is that \(\nu_t\) stays
isotropic, so its mass on any shell set is bounded by the angular
fraction of the shell,
\[
\nu_t(\mathcal{A}_H) \;\le\; R_{\mathrm{shell}} \qquad \text{(angular fraction of $\mathcal{A}_H$ on $S^{d-1}$).}
\]
If \(R_{\mathrm{shell}}\) is itself exponentially small in \(d\), the
overshoot is absorbed before it reaches absolute mass. This is
the dichotomy of Section 6: the shell-slice has small
\(\pi(\mathcal{A}_H)\) (equilibrium reason) but is not flux-isolated
(transit reason), so the burn-in window is real and \(C_{\mathrm{bad}}\)
measures its depth.

\subsection{Anisotropic case}\label{sec:anisotropic-case}

With distinct \(\lambda_i\), the coordinates run at rates
\(2\lambda_i\). Writing the per-coordinate ratios
\(u_i = Q_{0,i}^2/\sigma_0^2\), the overshoot
factors out:
\[
C_{\mathrm{bad}} \;\le\; \exp\!\left[\frac{1}{2}\sum_{i=1}^d \phi(u_i)\right].
\]
Equality requires the per-coordinate peak times to align, which
generically fails once the \(\lambda_i\) spread, so the
\textbf{effective dimension}
\(d_{\mathrm{eff}} = \#\{i : u_i \text{ far from } 1\}\) is governed by
the geometry of \(\mathcal{A}_H\) relative to the initial scale.
Anisotropy also destroys the uniform angular density, weakening the
angular bound to
\[
\nu_t(\mathcal{A}_H) \;\le\; R_{\mathrm{shell}} \cdot \prod_{i=1}^d \frac{\sigma_{\max}(t)}{\sigma_i(t)},
\]
where \(\sigma_i(t)\) is the time-\(t\) standard deviation in coordinate
\(i\). The prefactor is \(1\) at \(t = 0\) and grows to
\(\prod_i \sqrt{\lambda_i/\lambda_1}\) as \(t \to \infty\), exponential
in \(d\) for a spread spectrum.

\subsection{Bounded condition number}\label{sec:bounded-condition-number}

The clean controllable case is a bounded condition number
\(\kappa := \lambda_d/\lambda_1\) together with an initial scale chosen in
range, \(\sigma_0^2 \in [1/\lambda_d,\, 1/\lambda_1]\) (in the
\(\sigma = \sqrt{2}\) normalization). Then \(\sigma_i^2(t)\) stays in
\([1/\lambda_d, 1/\lambda_1]\) for all \(i\) and \(t\), so
\(\sigma_{\max}/\sigma_i \le \sqrt{\kappa}\) and the angular bound becomes
uniform in time,
\[
\nu_t(\mathcal{A}_H) \;\le\; R_{\mathrm{shell}} \cdot \kappa^{d/2} \qquad \text{for all } t \ge 0.
\]
Three regimes: \(\kappa = 1\) recovers the isotropic bound;
\(\kappa - 1 = O(1/d)\) keeps \(\kappa^{d/2} = O(1)\); a fixed \(\kappa > 1\) gives
exponential growth at rate \(\tfrac12\log \kappa\). The bound is useful when
\(R_{\mathrm{shell}}\) decays faster than \(\kappa^{d/2}\) grows, that is for
\(\mathcal{A}_H\) of angular radius below \(\arcsin(1/\sqrt{\kappa})\) on
\(S^{d-1}\).

\subsection{Combined statement and where dimension enters}\label{sec:combined-statement-and-where-dimension-enters}

Putting the transit and angular bounds together,
\[
\nu_t(\mathcal{A}_H) \;\le\; \min\!\Big(\, C_{\mathrm{bad}}\cdot \nu_0(\mathcal{A}_H),\;\; R_{\mathrm{shell}}\cdot \textstyle\prod_i \sigma_{\max}(t)/\sigma_i(t)\,\Big).
\]
Two independent sources of exponential-in-\(d\) growth appear: the
overshoot \(C_{\mathrm{bad}} = \exp[\tfrac12\sum_i \phi(u_i)]\),
controlled by the geometry of \(\mathcal{A}_H\) through the \(u_i\); and
the angular amplification
\(\prod_i \sigma_{\max}/\sigma_i \le \kappa^{d/2}\), controlled by the
spectral spread of \(J\). Avoiding both requires either placing
\(\mathcal{A}_H\) in a low-\(d_{\mathrm{eff}}\) subspace with the
initial law matched to stationarity in the complementary directions, or
a bounded \(\kappa\) with \(\sigma_0\) in range and \(\mathcal{A}_H\)
angularly thin. This is the constructive counterpart of the
flux-isolation hypothesis of \cref{thm:flux-isolation}.

\subsection{The one-dimensional swelling example}\label{sec:the-one-dimensional-swelling-example}

The simplest instance, recovering the introduction: \(J(Q) = Q^2/2\) on
\(\mathbb{R}\) with \(\sigma^2 = 2\), so \(\pi = N(0,1)\) and \(m = 1\).
Start at \(\nu_0 = \delta_{10}\) and take \(\mathcal{A}_H = [4,6]\).
Then \(\nu_0(\mathcal{A}_H) = 0\),
\(\pi(\mathcal{A}_H) \approx 10^{-5}\), but
\(\nu_t = N(10 e^{-t}, 1 - e^{-2t})\), so at \(t = \log 2\) the law is
\(N(5, 3/4)\) and \(\nu_t(\mathcal{A}_H) \approx 0.7\), an overshoot of
about \(10^5\). Here \(M = \infty\) (point mass), so the ceiling
\cref{prop:max-principle} is vacuous, and \(\mathcal{A}_H\) is
the one-dimensional analogue of a shell-slice on the transport path, so
\(\lambda_{\mathcal{A}_H}\) is not bounded below and \cref{thm:flux-isolation} does not apply. Both protective hypotheses fail at
once, which is why the example is canonical.

\section{Summary}\label{sec:summary}

At stationarity, the failure region has
exponentially-small-in-\(d\) mass,
\(\pi(\mathcal{A}_H) \le K e^{-Ad/2}\) (\cref{thm:static-mass}),
with a complementary Arrhenius rate in the small-noise regime. Along the
trajectory, a shape-free bound (\cref{thm:dynamic-mass}) relaxes
to twice this static mass after a burn-in of order \(d\), using only the
total mass and the global gap \(m\); the conversion step transfers to
discrete-time Langevin Monte Carlo (\cref{sec:discrete-time}). Adding
geometric information, a local relaxation rate
\(\lambda_{\mathcal{A}_H} \ge m\)
(\cref{def:local-rate}, \cref{thm:flux-isolation}) replaces the global rate
by a faster set-dependent one for flux-isolated sets, shortening the
burn-in by the factor \(m/\lambda_{\mathcal{A}_H}\); combined with the
maximum-principle ceiling it caps the trajectory mass uniformly in time.
The three-case taxonomy behind this dichotomy is proved exactly for the
quadratic prototype (\cref{sec:taxonomy-proofs}).

The two dynamic bounds are the two ends of one tradeoff.
Total-mass control is universal and dimension-free but transiently
uninformative; shape-aware control shortens the transient and, via the
ceiling, caps the mass uniformly, but requires verifying that
\(\mathcal{A}_H\) is geometrically isolated and that the start has a
bounded density ratio. The worked OU example (Section 7) shows the gap
between them is real: an angular shell-slice has tiny equilibrium mass
yet an \(e^{\Theta(d)}\) transient overshoot, and only the
flux-isolation hypothesis rules it out.

\section*{Acknowledgments}
The author used an AI assistant (Anthropic's Claude) in preparing this
manuscript, for help with drafting and editing the exposition and with
organizing and checking the mathematical arguments. The author has
personally verified all results, proofs, numerical claims, and
references, and takes full responsibility for the content, consistent
with SIAM's policy on the use of AI tools.

\bibliographystyle{siamplain}
\bibliography{references}

\end{document}